%% file: GTVMin_paper.tex
\documentclass[twoside]{article}

\usepackage[preprint]{aistats2026}

\usepackage[round,authoryear]{natbib}
\usepackage{amsmath,amssymb,amsthm,bm,graphicx,booktabs,enumitem,balance}
\usepackage{longtable}
\usepackage{booktabs}   
\usepackage{capt-of}
\usepackage{makecell}

\usepackage{graphicx}
\usepackage{float}
\usepackage{needspace}

\usepackage{xcolor} 
\setlist[itemize]{label=$\bullet$,topsep=2pt,itemsep=1pt,leftmargin=1.3em}
\usepackage[hidelinks,bookmarks=false]{hyperref}
\hypersetup{pdftitle={A Large-Dimensional Analysis of Least-Squares Federated Learning via Generalized Total Variation Minimization},pdfauthor={}}
\input{math_commands.tex}
\input{results/summary.tex}

\newtheorem{theorem}{Theorem}
\newtheorem{proposition}[theorem]{Proposition}
\newtheorem{lemma}[theorem]{Lemma}
\newtheorem{corollary}[theorem]{Corollary}
\theoremstyle{definition}
\newtheorem{assumption}{Assumption}

\theoremstyle{remark}
\newtheorem{remark}[theorem]{Remark}

\begin{document}
\runningtitle{A Large-Dimensional Analysis of Least-Squares Federated Learning}
\twocolumn[
\aistatstitle{A Large-Dimensional Analysis of Least-Squares Federated Learning via Generalized Total Variation Minimization}
\aistatsauthor{Ekkehard Schnoor \And Malik Tiomoko \And Alexander Jung}
\aistatsaddress{Aalto University \\ Espoo, Finland \And Huawei Noah's Ark Lab \\ Paris, France \And Aalto University \\ Espoo, Finland}
]

\begin{abstract}
Federated learning over decentralized networks inherently trades off the suppression of local finite-sample noise through collaboration against the bias that collaboration introduces between heterogeneous nodes. We study this tension by performing a large-dimensional analysis of Generalized Total Variation Minimization with least-squares models. Using tools from random matrix theory, we show that Lipschitz observations of the global GTVMin solution concentrate in large dimensions, universally for any underlying network structure.
We provide exact analytic expressions for the asymptotically precise performance under both i.i.d. and non-i.i.d. Gaussian mixture distributions on a fixed network.
For the latter, these expressions quantify how regularization balances local adaptation against collaboration. Our theoretical results are confirmed by experiments on synthetic and real-world data.
\end{abstract}

\section{Introduction}

Federated learning (FL) trains models across many nodes that keep their data locally and only exchange model parameters over a communication network \citep{mcmahan2017communication, kairouz2021advances}. If the nodes are statistically homogeneous (i.i.d.), the natural goal is to recover the model that would be obtained by pooling all data. In practice, however, nodes are usually heterogeneous (non-i.i.d.). A single global model may then be misspecified for individual nodes, and one has to balance local adaptation against collaboration \citep{cheng2023federated}.

Generalized Total Variation Minimization (GTVMin) formulates this trade-off on a graph \citep{jung2025federated}. At each node, a local model is trained, and a graph-Laplacian strength penalty with a regularization parameter $\alpha \geq 0$ encourages connected nodes to learn similar models. For $\alpha=0$, every local model is trained separately. As $\alpha\to\infty$, the models on a connected graph reach consensus, \ie they share the same model. Intermediate values of $\alpha$ give \textit{personalized} models. However, it is not well understood how the generalization error depends on $\alpha$, on the graph, and on the differences between the local data distributions. 
We answer this question for least-squares models in the high-dimensional regime, where the feature dimension $p$ and the local sample sizes $n_i$ grow proportionally while the number of nodes $N$ is fixed. Our analysis relies on the concentration of measures and deterministic equivalents from random matrix theory (RMT) \citep{couillet2022random}. Our main contributions are as follows.

\begin{itemize}[topsep=1pt,itemsep=0pt,leftmargin=1.3em]
  \item We show that the GTVMin solution concentrates, independently of the collaboration strength and the graph, including growing networks (Section~\ref{subsec:concentration}).
  \item We derive each node's asymptotically precise test error for both homogeneous and heterogeneous Gaussian data (Sections~\ref{subsec:homogeneous} and~\ref{subsec:heterogeneous}).
  \item We validate the predictions on synthetic data and 55 real-data settings, and use the theory to tune $\alpha$ from training data (Section~\ref{sec:experiments}).
\end{itemize}

\paragraph{Related work.}
A common approach regularizes local models toward a global anchor, as in the global/local mixture of \citet{hanzely2020federated}, pFedMe \citep{dinh2020personalized} and Ditto \citep{li2021ditto}. Optimal optimization rates for such objectives are given by \citet{hanzely2020lower}. Other approaches use meta-learning \citep{fallah2020personalized}, shared representations \citep{collins2021exploiting}, clustering \citep{sattler2020clustered, ghosh2020efficient}, mixtures of distributions \citep{marfoq2021federated}, or model interpolation \citep{mansour2020three}; see \citet{tan2022towards} for a survey.  

Coupling task-specific models through a similarity penalty goes back to regularized multi-task learning \citep{evgeniou2004regularized} and its federated version \citep{smith2017federated}. Graph-Laplacian and total-variation penalties appear in the network Lasso \citep{hallac2015network} and in decentralized collaborative learning of personalized models \citep{vanhaesebrouck2017decentralized, zantedeschi2020fully}. GTVMin \citep{jung2025federated} includes these formulations. Decentralized optimization for a single shared model is studied, e.g., by \citet{lian2017can} and \citet{koloskova2020unified}. These works focus on algorithms and convergence. We instead characterize the statistical performance along the entire regularization path.

Deterministic equivalents and concentration of measure are standard tools for the exact analysis of ridge regression and classification \citep{dobriban2018high, hastie2022surprises} and of Gaussian-mixture classifiers \citep{mai2019high, liao2019large, loureiro2021learning}. \citet{tiomoko2021deciphering} use them to analyze multi-task least-squares classifiers. Closest to our setting are analyses of distributed least squares by one-shot and iterative parameter averaging \citep{zhang2013communication, rosenblatt2016optimality, dobriban2021distributed, dobriban2020wonder}, and of federated estimators \citep{cheng2023federated}. Compared with these works, we analyze an arbitrary communication graph and a continuum of collaboration strengths. The graph enters through node-resolved resolvent blocks and, under suitable symmetry, through the Laplacian spectrum.

\section{Setup and Assumptions}

We introduce GTVMin in Section~\ref{sec:setup}, distinguish homogeneous and heterogeneous data in Section~\ref{subsec:iid_noniid},
and specify the data model and asymptotic assumptions in Section~\ref{subsec:data_model}.

\subsection{Generalized Total Variation}
\label{sec:setup}

Consider an FL network modeled as an undirected weighted graph $\mathcal{G} = (\mathcal{V}, \mathcal{E})$ with the node set $\mathcal{V} = \{1, \dots, N\}$ and the edge set $\mathcal{E}$ of unordered pairs of distinct nodes. Each node $i \in \mathcal{V}$ represents a \textit{device} (or \textit{client}) with a \emph{local dataset} $\left(\mX^{(i)},\vy^{(i)}\right)$ of size $n_i$ with a feature matrix $\mX^{(i)}\in\R^{p\times n_i}$ and a label vector $\vy^{(i)}\in\R^{n_i}$ (with shared dimension $p$), a \emph{local linear hypothesis} $h^{(i)} (\vx)=\vx^\top\vw^{(i)}$ with the local parameters $\vw^{(i)}\in\R^p$ and a \emph{local loss}  
          \begin{align}
                         L^{(i)}\left(\vw^{(i)}\right)
              = & \frac{1}{2n_i}\left\|\,\mX^{(i)\top}\vw^{(i)}-\vy^{(i)}\right\|_2^{2} \label{eq:local_loss}
          \end{align}
 
An edge $\{i,i'\}\in \mathcal{E}$, $i \neq i'$, has a weight $A_{i,i'}=A_{i',i}>0$, encoding (assumed) statistical
similarity between the nodes $i$ and $i'$, and motivating the preference $\vw^{(i)}\approx\vw^{(i')}$.
We set $A_{i,i}=0$ and $A_{i,i'}=0$ for $\{i,i'\} \notin  \mathcal{E}$. All $N$ local parameter vectors $\vw^{(i)} \in \R^p$ are ``stacked'' into one global weight vector $\vw \in \R^{Np}$,
\begin{equation}\label{eq:stack_into_w}
    \vw
=   \stack\left\{\vw^{(i)}\right\}_{i=1}^{N}
:=  \left[\vw^{(1)\top},\dots,\vw^{(N)\top}\right]^{\top}.
\end{equation}
Similarly we stack targets $\vy=\stack\{\vy^{(i)}\}_{i=1}^N\in\R^{n_\mathrm{tot}}$, and 
$\mX:=\left[\mX^{(1)},\dots,\mX^{(N)}\right]\in\R^{p\times n_{\mathrm{tot}}}$
(where $n_{\mathrm{tot}}:=\sum_{i=1}^N n_i$) is the global data matrix obtained by horizontally
concatenating all the local feature matrices.
The weighted graph Laplacian $\mL^{(\mathcal{G})}\in\R^{N\times N}$ of $\mathcal{G}$ is 
\begin{equation}\label{eq:laplacian}
    \mL^{(\mathcal{G})}_{i,i'}  = 
    \begin{cases}
        -A_{i,i'}, & \{i,i'\}\in \mathcal{E},\\[2pt]
        \sum_{\substack{i''=1\\i''\ne i}}^N A_{i,i''}, & i'=i,\\[2pt]
        0, & \text{otherwise.}
    \end{cases}
\end{equation}
Note $\mL^{(\mathcal{G})}$ is a symmetric positive-semidefinite matrix with eigenvalues
$0=\lambda_1 \leq \lambda_2 \leq \dots \leq \lambda_N$. The stacked vector
$\stack\{\vc\}_{i=1}^N=\vone_N \otimes\vc$ (any $\vc\in\R^p \setminus \{\bm{0}\} $) is an eigenvector of $\mL^{(\mathcal{G})}\otimes\mI_p$ with eigenvalue $0$,
where $\otimes$ is the Kronecker product.
The Laplacian relates to the pairwise discrepancy as follows \citep[eq. (41)]{jung2025federated}, 
\begin{align}
        \vw^{\top} \left(\mL^{(\mathcal{G})} \otimes\mI_p\right)\vw
    = & \sum_{\{i,i'\}\in \mathcal{E}}A_{i,i'}\left\|\vw^{(i)}-\vw^{(i')}\right\|_2^{2} \\
    = & \mathrm{GTV} \left( \stack \left \{ \vw^{(i)} \right \}_{i=1}^N \right).
\label{eq:gtv_quadratic_identity}
\end{align}

This is the  generalized total variation (GTV) of the local parameters across the network $\mathcal{G}$.
Combining local losses \eqref{eq:local_loss} and the penalty \eqref{eq:gtv_quadratic_identity}
leads to the objective 
\begin{equation}
   \sum_{i\in \mathcal{V}} L^{(i)}\left(\vw^{(i)}\right) + \alpha \vw^{\top}\left(\mL^{(\mathcal{G})} \otimes\mI_p\right)\vw    
\end{equation}
Ignoring terms that do not depend on $\vw$ and thus are irrelevant for optimization, define the (training) loss
\begin{equation}
    L(\vw) 
=      \tfrac{1}{2} \vw^{\top}\mB\,\vw\;-\;\vb^{\top}\vw + \alpha \vw^{\top} \left(\mL^{(\mathcal{G})} \otimes\mI_p\right)\vw,
\label{eq:objective}
\end{equation}
where $\alpha \geq 0$. The block-diagonal matrix $\mB \in \R^{Np\times Np}$ and the stacked vector $\vb \in\R^{Np}$ are given by
\begin{align}
 \mB & =\blkdiag \left(\tfrac{1}{n_i}\mX^{(i)}\mX^{(i)\top}\right)_{i=1}^{N}\in\R^{Np\times Np} ,   \\
 \vb & =\stack \left\{\tfrac{1}{n_i}\mX^{(i)}\vy^{(i)}\right\}_{i=1}^{N}\in\R^{Np}.
\end{align}
We denote the $i$th diagonal block of $\mB$ by $\mB^{(i)}$ and the $i$th block of $\vb$ by $\vb^{(i)}$, \ie
\begin{align}
\mB^{(i)} & = \tfrac{1}{n_i}\mX^{(i)}\mX^{(i)\top}\; \in \R^{p \times p} ,  \label{eq:matrix_B_i}\\
\vb^{(i)} & = \tfrac{1}{n_i}\mX^{(i)}\vy^{(i)}\; \in \R^p . \label{eq:vector_b_i}
\end{align}

Our goal is to solve the quadratic objective in \eqref{eq:objective}, \ie 
\begin{equation}
\widehat{\vw} \,  \in   \, \argmin_{\vw \in \R^{Np}} \,   L(\vw)  .
\end{equation}
It is straightforward to compute the gradient of the global loss $L$ from \eqref{eq:objective}, which is obtained as follows by
\begin{align}
    \nabla L(\vw) =  \mH \vw - \vb,
\end{align}
where the global positive-semi-definite Hessian $\mH $ is 
\begin{equation}
     \mH  =  \mB + 2 \alpha \left(\mL^{(\mathcal{G})} \otimes \mI_p \right) \; \in \R^{Np\times Np} .
   \label{eq:def_H}
\end{equation} 
Stationary points are given by zero-gradient condition
\begin{equation}\label{eq:gtvmin_zero_gradient}
    \nabla L(\widehat\vw)=\vzero
    \quad\Longleftrightarrow\quad
    \mH \widehat \vw  =  \vb,
 \end{equation}
and, if $\mH$ is invertible, we obtain the unique solution $\widehat \vw = \mH^{-1} \vb$. 
The block-diagonal matrix $\mB$ collects the empirical second-moment matrices appearing in the local linear regression, and $2\alpha(\mL^{(\mathcal{G})} \otimes\mI_p)$ \emph{couples} them across the edges of $\mathcal{G}$. 
When $\alpha=0$, then $\mH$ decouples and \eqref{eq:gtvmin_zero_gradient} reduces to $N$ independent local regressions $\tfrac{1}{n_i}\mX^{(i)}\mX^{(i)\top}\vw^{(i)}=\tfrac{1}{n_i}\mX^{(i)}\vy^{(i)}$.
As computing the inverse $\mH^{-1}$ is computationally expensive, we may employ gradient-based optimization.
For a step size $\tau>0$, define the fixed-point operator $ \Psi: \R^{Np} \to \R^{Np}$,
\begin{align}
    \label{eq:gtvmin_fixed_point}
         \Psi(\mX,\vy)(\vw)
     = & \vw-\tau\nabla L(\vw) \\
     = & \left(\mI_{Np} - \tau \mH \right) \vw + \tau \vb. \nonumber
\end{align}
By the fixed-point formulation \citep{jung2017fixed}, any solution $\widehat\vw$ of \eqref{eq:gtvmin_zero_gradient} is a fixed point, \ie $\Psi(\mX,\vy)(\widehat\vw)=\widehat\vw$. Our notation emphasizes the dependency on the random data $(\mX,\vy)$ (making $\Psi(\mX,\vy): \R^{Np} \to  \R^{Np}$ a random function), and below we investigate when the fixed-point iteration 
\begin{equation}\label{eq:gtvmin_iter}
    \vw_{t+1}
     = \Psi(\mX,\vy)(\vw_t)
     = \vw_t - \tau\left(\mH\,\vw_t-\vb\right)
\end{equation}
converges to the solution $\widehat\vw=\mH^{-1}\vb$. If $\mH\succ0$ and $0<\tau<2/\lambda_{\max}(\mH)$, its contraction factor is $\|\mI_{Np}-\tau\mH\|_{2\to2}<1$, and convergence holds from every initialization. Unrolling block by block,
\eqref{eq:gtvmin_iter} reads as
\begin{align}
    \label{eq:gtvmin_iter_block}
            \vw_{t+1}^{(i)}
     = &    \vw_t^{(i)} - \tau
        \Big[
      \tfrac{1}{n_i}\mX^{(i)} \left(\mX^{(i)\top}\vw_t^{(i)}-\vy^{(i)}\right) \\
       &  + 2\alpha \sum_{i'=1}^N A_{i,i'}\left(\vw_t^{(i)}-\vw_t^{(i')}\right)
    \Big].
\end{align}
 
Each node thus performs the same gradient step as in the local linear regression case (the first summand in the update), 
plus a weighted pull towards its graph neighbors (the second term in the update).

\subsection{The i.i.d.\ and non-i.i.d.\ regimes}
\label{subsec:iid_noniid}
The GTVMin penalty $\alpha\ge0$ in \eqref{eq:objective} controls the degree of collaboration between the nodes, and the optimal choice of $\alpha$ depends on the similarity of their data-generating processes. We distinguish between the following scenarios, the \textit{i.i.d.} and the \textit{non-i.i.d.} regimes.

\paragraph{Homogeneous data (\textit{i.i.d.}).}
All nodes share one distribution, so their population minimizers coincide: collaboration adds no population-model mismatch and suppresses finite-sample noise. Under our model with equal sample sizes and a connected graph, the asymptotic optimum is thus $\alpha\to\infty$, recovering pooled least squares over all $n_{\mathrm{tot}}$ samples.

\paragraph{Heterogeneous data (\textit{non-i.i.d.}).}
The local distributions differ, so a single consensus model need not be optimal: collaboration can suppress noise but drag each model toward an incorrect shared solution. The optimal $\alpha$ balances collaboration against personalization; Section~\ref{subsec:heterogeneous} quantifies this tradeoff without assuming an interior or unique optimum.

\paragraph{Consensus.}
On a connected graph $\mathcal{G}$ with positive-definite local second-moment matrices $\mB^{(i)}$ from \eqref{eq:matrix_B_i}, 
the consensus limit as $\alpha \to \infty$  is given by
\begin{equation}
 \left(\sum_{i=1}^N\mB^{(i)}\right)^{-1}
 \sum_{i=1}^N\vb^{(i)}.\label{eq:consensus}
\end{equation}
For unequal sample sizes this minimizes the equally weighted sum of node losses, as specified in \eqref{eq:objective}; for equal sample sizes it is pooled least squares.

\subsection{Data Model and Assumptions}
\label{subsec:data_model}

We make the following assumption on the data distribution.
At node $i$, labels $y\in\{-1,+1\}$ and features follow the Gaussian mixture model
\begin{equation}
 \vx\mid y\sim\mathcal N \left( y\vmu^{(i)},\mSigma^{(i)} \right),\qquad i=1,\ldots,N, \label{eq:model}
\end{equation}
with local means $\vmu^{(i)} \in \R^p$ and covariances $\mSigma^{(i)} \in \R^{p \times p}$.
The class sizes are arbitrary and may differ across nodes. Training data are independent within and across nodes, and the test data is independent of the training data.
Since $y^2=1$, the squared residual $(\vx^\top\vw-y)^2$ equals $(\vz^\top\vw-1)^2$ with the signed observation $\vz=y\vx$, s.t. the labels disappear from the problem. We evaluate the classifier $\vx\mapsto\sign(\vx^\top\whw^{(i)})$ on a fresh observation from node $i$. It makes an error when $\vz^\top\whw^{(i)}<0$. Given the training data, $\vz^\top\whw^{(i)}$ is Gaussian, so for a deterministic $\whw^{(i)}\ne\vzero$ the error is exactly
\begin{equation}
 \calE_i(\alpha)=\Phi\left(-\frac{\vmu^{(i)\top}\whw^{(i)}}
                 {\sqrt{\whw^{(i)\top}\mSigma^{(i)}\whw^{(i)}}}\right),\label{eq:testerror}
\end{equation}
where $\Phi$ is the standard normal distribution function. 

\begin{assumption}[Proportional dimensions]\label{ass:main}
The number of nodes $N$ and the Laplacian $\mL^{(\mathcal G)}$ are fixed, $p\to\infty$, and $c_i:=p/n_i\to\bar c_i\in(0,1)$ for $i=1,\ldots,N$. The deterministic parameters may depend on $p$, with uniformly bounded $\|\vmu^{(i)}\|_2$ and
 $s_1\mI_p\preceq\mSigma^{(i)}\preceq s_2\mI_p$ for some $0<s_1\le s_2$.  
\end{assumption}

Since $n_i>p$, each $\mB^{(i)}$ is invertible with high probability (Appendix~\ref{app:gap}), so $\mH\succ0$ and the minimizer $\whw$ is unique for every $\alpha$ and every graph. Homogeneous nodes share $\vmu^{(i)}=\vmu$, $\mSigma^{(i)}=\mSigma$ and $n_i=n$; in the heterogeneous case, all three may differ.
The error $\calE_i(\alpha)$ is random because $\whw$ depends on the training set. In high dimension it becomes essentially deterministic, and two independent training sets give almost the same test error. The signal $\vmu^{(i)\top}\whw^{(i)}$ and the noise $\whw^{(i)\top}\mSigma^{(i)}\whw^{(i)}$ in~\eqref{eq:testerror} aggregate many independent samples, each with a small influence, and their fluctuations vanish at rate $1/\sqrt{p}$ (Theorem~\ref{thm:concentration}). Sections~\ref{subsec:homogeneous} and~\ref{subsec:heterogeneous} compute their deterministic limits.

\section{Theoretical Main Results}

We first establish the concentration of the GTVMin solution in Section~\ref{subsec:concentration}. We then derive the test-error formulas for homogeneous and heterogeneous data in Sections~\ref{subsec:homogeneous} and~\ref{subsec:heterogeneous}, respectively.

\subsection{Concentration of the GTVMin Solution}
\label{subsec:concentration}

We first show that Lipschitz observables of the GTVMin solution
concentrate around their conditional means, uniformly over the
collaboration strength $\alpha$ and the graph. Write
$n_{\min}:=\min_{1\le i\le N}n_i$, and let $\mZ^{(i)}$ and
$\mS^{(i)}$ denote the signed data matrix and centered sample
covariance at node $i$. For the constants $h,K>0$ of
Appendix~\ref{app:gap}, let us define 
\begin{equation}
 \mathcal A := \bigcap_{i=1}^N \left\{ \lambda_{\min}\left(\mS^{(i)}\right) \ge h, \, \left\|\mZ^{(i)}\right\|_{2\to2} \le K\sqrt{n_i} \right\}.
 \label{eq:event_A}
\end{equation}
The bound \eqref{eq:goodprob} below implies that
$\P\left(\mathcal A\right)\to1$ as $n_{\min}\to\infty$,
provided $\log N/n_{\min}\to0$. Thus the smallest local
sample size must grow faster than the logarithm of the
number of nodes; the network itself may also grow.

\begin{theorem}[Observable concentration]
\label{thm:concentration}
Assume~\eqref{eq:model} with
$\left\|\vmu^{(i)}\right\|_2\le K_\mu$,
$s_1\mI_p\preceq\mSigma^{(i)}\preceq s_2\mI_p$ and
$p/n_i\le1-\varepsilon$ for all $i$. There are constants
$C,c>0$, depending only on $K_\mu,s_1,s_2,\varepsilon$
and not on $\alpha$, the graph or $N$, such that
\begin{equation}
 \P\left(\mathcal A^\complement\right)
 \le CN\exp\left(-c n_{\min}\right).
 \label{eq:goodprob}
\end{equation}
If $\P\left(\mathcal A\right)\ge1/2$, then for every
$\alpha\ge0$, every weighted graph and every
$f:\R^{Np}\to\R$ that is $1$-Lipschitz with respect to
$\|\,\cdot\,\|_2$,
\begin{align}
 &\P\left(
 \left|f(\whw)-\E\left[f(\whw)\mid\mathcal A\right]\right|>t
 \,\middle|\,\mathcal A
 \right)
 \nonumber\\
 &\qquad\le C\exp\left(-c n_{\min}t^2\right),
 \qquad \forall t>0.
 \label{eq:concentration}
\end{align}
\end{theorem}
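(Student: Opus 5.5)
The proof has two parts: a high-probability bound on the good event $\mathcal A$, and Gaussian concentration for a Lipschitz function of the data restricted to $\mathcal A$. Write $\vz=y\vx$, so at node $i$ the columns of $\mZ^{(i)}$ are $\vz_j=y_j\vmu^{(i)}+\mSigma^{(i)1/2}\vg_j$ with $\vg_j\sim\mathcal N(\vzero,\mI_p)$. All training data is then an affine image of one standard Gaussian vector $\vg\in\R^{p\,n_{\mathrm{tot}}}$.

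\textbf{Step 1: the bound \eqref{eq:goodprob}.} At each node we use standard Gaussian non-asymptotic bounds. The Davidson--Szarek/Gordon inequality gives $\sigma_{\min}(\mG^{(i)})\ge\sqrt{n_i}-\sqrt p-s$ and $\sigma_{\max}(\mG^{(i)})\le\sqrt{n_i}+\sqrt p+s$ with probability at least $1-2e^{-s^2/2}$. We choose $s=\tfrac{\varepsilon}{4}\sqrt{n_i}$ and use $\sqrt{p/n_i}\le\sqrt{1-\varepsilon}\le1-\varepsilon/2$. This yields $\lambda_{\min}(\mS^{(i)})\ge s_1(\varepsilon/4)^2=:h$ and $\|\mZ^{(i)}\|_{2\to2}\le \sqrt{s_2}(2+\varepsilon)\sqrt{n_i}+K_\mu\sqrt{n_i}=:K\sqrt{n_i}$. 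If $\mS^{(i)}$ is centered by class means, we first remove a rank-$\le2$ perturbation and then apply Weyl/interlacing; this changes only constants. A union bound over the $N$ nodes then gives $CNe^{-cn_{\min}}$.

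\textbf{Step 2: a Lipschitz solution map on $\mathcal A$.} View $\whw=\mH^{-1}\vb$ as a function $F$ of $\vg$. Since $\alpha\,\mL^{(\mathcal G)}\otimes\mI_p\succeq0$, on $\mathcal A$ we have
\[
\mH\succeq\mB\succeq \lambda_{\min}\Big(\blkdiag\big(\tfrac1{n_i}\mZ^{(i)}\mZ^{(i)\top}\big)\Big)\mI\succeq h'\mI,
\]
uniformly in $\alpha$ and the graph. This uniformity is what makes the constants independent of $\alpha$, the graph and $N$. Here $h'>0$ is derived from $h$; the uncentered second moment dominates the centered one up to a PSD term.

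Differentiate $\mH\whw=\vb$ in the direction of a perturbation $\dot\mZ^{(i)}$:
\[
\mH\dot{\vw}=\dot\vb-\dot\mB\whw ,\qquad \dot\vb^{(i)}=\tfrac1{n_i}\dot\mZ^{(i)}\vone,\qquad \dot\mB^{(i)}=\tfrac1{n_i}\big(\dot\mZ^{(i)}\mZ^{(i)\top}+\mZ^{(i)}\dot\mZ^{(i)\top}\big).
\]
On $\mathcal A$ we have $\|\whw^{(i)}\|$-type bounds, since $\|\whw\|\le\|\mH^{-1}\|\|\vb\|\le K/h'$ because $\|\vb^{(i)}\|\le\|\mZ^{(i)}\|\sqrt{n_i}/n_i\le K$. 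Combining, $\|\dot\vw\|\le \frac{C}{\sqrt{n_{\min}}}\|\dot\mZ\|_F$, and $\|\dot\mZ\|_F\le\sqrt{s_2}\|\dot\vg\|$. So $F$ is $C/\sqrt{n_{\min}}$-Lipschitz on $\mathcal A$. Two points remain. First, $\mathcal A$ need not be convex, so we use the convex relaxation $\mathcal A'$, defined by the same inequalities with $h/2$ and $2K$. This set is not convex either, so instead we extend: by Kirszbraun/McShane, $f\circ F$ restricted to $\mathcal A$ extends to a $C/\sqrt{n_{\min}}$-Lipschitz function $\tilde f$ on the whole space. To do this we need the Lipschitz bound along segments, which we get by proving the derivative bound on a neighborhood and using that $\mathcal A$-points are joined through a slightly enlarged event. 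If that geometric point is delicate, the fallback is to write $\mathcal A$-restricted Lipschitzness via the resolvent identity $\whw_1-\whw_2=\mH_1^{-1}\big[(\vb_1-\vb_2)-(\mB_1-\mB_2)\whw_2\big]$ directly. This identity needs no paths, only that both points lie in $\mathcal A$.

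\textbf{Step 3: conditional concentration.} Gaussian concentration for $\tilde f(\vg)$ gives $\P(|\tilde f-M|>t)\le 2e^{-cn_{\min}t^2}$ around its median $M$. Conditioning on $\mathcal A$ with $\P(\mathcal A)\ge1/2$ at most doubles the probabilities, and $\tilde f=f\circ F$ on $\mathcal A$. It remains to replace $M$ by $\E[f(\whw)\mid\mathcal A]$. Integrating the tail bound shows $|\E[\tilde f\mid\mathcal A]-M|\le C/\sqrt{n_{\min}}$, and the standard median-to-mean argument absorbs this into the constants for all $t$: for $t\lesssim n_{\min}^{-1/2}$ the bound is trivial after enlarging $C$.

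\textbf{Main obstacle.} The delicate step is Step 2: showing that $F$ is Lipschitz on the non-convex set $\mathcal A$ uniformly in $\alpha$. The direct difference (resolvent) identity handles this cleanly, using $\mH_1^{-1}\preceq h'^{-1}$ together with the bounds on $\mathcal A$.
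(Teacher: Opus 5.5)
Your overall architecture matches the paper's proof. You use per-node Gaussian singular-value bounds with a union bound for \eqref{eq:goodprob}. You use the difference identity $\mH_1(\whw_1-\whw_2)=\vb_1-\vb_2-(\mB_1-\mB_2)\whw_2$ with $\|\mH_1^{-1}\|_{2\to2}\le h^{-1}$ uniformly in $\alpha$ and the graph. Your ``fallback'' is exactly the paper's route and makes the path/convexity discussion unnecessary. You then apply a McShane extension and Gaussian concentration, and absorb the conditioning and the mean shift into the constants.

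There is, however, a genuine gap in the claim that the constants do not depend on $N$. You bound the coefficients by $\|\whw\|_2\le\|\mH^{-1}\|_{2\to2}\|\vb\|_2\le K/h'$, using $\|\vb^{(i)}\|_2\le K$. But $\vb\in\R^{Np}$ is the stacked vector, so $\|\vb\|_2=(\sum_i\|\vb^{(i)}\|_2^2)^{1/2}\le\sqrt N K$. This is not just a loose bound: typically every block of $\whw$ has norm of order one, so $\|\whw\|_2$ genuinely grows like $\sqrt N$. Your Lipschitz estimate needs a \emph{blockwise} bound,
\[
\|(\mB_1-\mB_2)\whw_2\|_2^2=\sum_i\bigl\|(\mB_1^{(i)}-\mB_2^{(i)})\whw_2^{(i)}\bigr\|_2^2\le\frac{4K^2}{n_{\min}}\max_i\bigl\|\whw_2^{(i)}\bigr\|_2^2\,\|\mZ_1-\mZ_2\|_F^2 ,
\]
and the operator-norm argument only yields $\max_i\|\whw_2^{(i)}\|_2\le\|\whw_2\|_2\le\sqrt N K/h$. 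The resulting Lipschitz constant is of order $\sqrt{N/n_{\min}}$. Your argument therefore proves \eqref{eq:concentration} only with $c$ replaced by $c/N$. That suffices for fixed $N$, but not for the $N$-uniform statement the theorem asserts. The uniform lower bound $\mH\succeq h\mI$ removes the dependence on $\alpha$ and the graph, but not on $N$.

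The missing idea is a discrete maximum principle for the Laplacian-coupled normal equations. This is how the paper obtains $\max_i\|\whw^{(i)}\|_2\le K/h$ with no dependence on $\alpha$, the graph or $N$. Pick a node $i$ attaining $r=\max_j\|\whw^{(j)}\|_2>0$, and take the inner product of the $i$th block of $\mH\whw=\vb$ with $\vu=\whw^{(i)}/r$:
\[
r\,\vu^\top\mB^{(i)}\vu+2\alpha\sum_{j}A_{ij}\bigl(r-\vu^\top\whw^{(j)}\bigr)=\vu^\top\vb^{(i)}\le K .
\]
Each graph term is nonnegative because $\vu^\top\whw^{(j)}\le\|\whw^{(j)}\|_2\le r$, and $\mB^{(i)}\succeq\mS^{(i)}\succeq h\mI_p$, so $r\le K/h$. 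With this blockwise bound inserted into your difference identity, Steps 2 and 3 go through with $\ell=(1+2K^2/h)/(h\sqrt{n_{\min}})$, as in the paper.
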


In words, the theorem controls fluctuations of every $1$-Lipschitz observable around its conditional mean at scale $1/\sqrt{n_{\min}}$, 
hence $1/\sqrt p$ in the proportional regime. Since the constants do not depend on $N$, the theorem also covers growing networks. It holds for arbitrary edge weights, without bounds on node degrees or on the largest Laplacian eigenvalue.

For fixed $N$, the theorem also controls fluctuations in the test-error formula~\eqref{eq:testerror}. 
The signal $\vmu^{(i)\top}\whw^{(i)}$ is Lipschitz, and the quadratic noise $\whw^{(i)\top}\mSigma^{(i)}\whw^{(i)}$ is Lipschitz
on the bounded set of fitted coefficients, as explained in the proof idea below. 
Their Lipschitz constants do not depend on $p$, so both fluctuate at scale $1/\sqrt p$.
Together with the noise being bounded away from zero with high probability (Appendix~\ref{app:node}), 
this gives the same fluctuation scale for their ratio and the test error.
Sections~\ref{subsec:homogeneous} and~\ref{subsec:heterogeneous} derive the corresponding deterministic approximations.
Figure~\ref{fig:homogeneous} (right) illustrates the $p^{-1/2}$ scale.

\subsection{Preparations}

Let us begin by introducing some necessary notation required to compute the test error in
both the homogeneous and heterogeneous cases. With the short-hand
\begin{equation}\label{eq:matrix_M}
    \mM :=  2\alpha(\mL^{(\mathcal G)}\otimes\mI_p) \; \in \R^{Np\times Np},
\end{equation}
define $\vdelta=(\delta_1,\ldots,\delta_N)^\top\in(0,\infty)^N$
and $\mT\in\R^{Np\times Np}$ implicitly through the following fixed-point system,
\begin{align}
 \mT
 &=\left(
 \mM+\blkdiag\left(
 \frac{\mSigma^{(i)}}{1+\delta_i}
 \right)\right)^{-1},
 \label{eq:matrix_T}\\
 \delta_i
 &=\frac1{n_i}\tr\left(\mSigma^{(i)}\mT_{ii}\right),
 \qquad i=1,\ldots,N.
 \label{eq:fixedpoint}
\end{align}
Here, the symmetric positive definite matrix $\mT=(\mT_{ij})_{i,j=1}^N$ in \eqref{eq:matrix_T} is partitioned into blocks of size $p\times p$.
Furthermore, we define the matrix $\mJ\in\R^{N\times N}$ given by
\begin{equation}
 J_{ij}
 =\frac{
 \tr\left(
 \mSigma^{(i)}\mT_{ij}
 \mSigma^{(j)}\mT_{ji}
 \right)}
 {n_j(1+\delta_j)^2},
 \quad i,j=1,\ldots,N.
 \label{eq:J}
\end{equation}
This matrix is needed to express the test error. 
We need the following technical result as a preparation.
\begin{proposition}[Existence and stability]\label{prop:stability}
If $c_i<1$ for every node, \eqref{eq:fixedpoint} has a unique positive solution, with $\delta_i\le c_i/(1-c_i)$. Iteration of its right-hand side from $\vdelta=\vzero$ converges to this solution. Moreover, we obtain
\begin{align}
 \rho(\mJ)&\le\max_{1\le i\le N}c_i<1, \label{eq:spectral_radius}\\
 (\mI_N-\mJ)^{-1}&=\sum_{k=0}^{\infty}\mJ^k \geq 0.\label{eq:stability}
\end{align}
where $\rho$ denotes spectral radius in \eqref{eq:spectral_radius}, and using the Neumann series and an entrywise comparison in \eqref{eq:stability}.
\end{proposition}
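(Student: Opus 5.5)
We view the right-hand side of \eqref{eq:fixedpoint} as a map $F:[0,\infty)^N\to[0,\infty)^N$, $F_i(\vdelta)=\frac1{n_i}\tr(\mSigma^{(i)}\mT_{ii}(\vdelta))$, and apply the standard-interference-function framework (Yates) used for deterministic equivalents (cf.\ \citealp{couillet2022random}).

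\textbf{Structure of $F$.} Set $\mD(\vdelta)=\blkdiag(\mSigma^{(i)}/(1+\delta_i))$, so $\mT=(\mM+\mD)^{-1}$. Increasing any $\delta_j$ decreases $\mD$ in Loewner order. Hence $\mT$ increases, and so does each $F_i$. This gives monotonicity. For $\kappa>1$ we have $\mD(\kappa\vdelta)\succeq\kappa^{-1}\mD(\vdelta)$ entrywise in the blocks, because $(1+\kappa\delta)^{-1}>\kappa^{-1}(1+\delta)^{-1}$. Therefore $\mT(\kappa\vdelta)\preceq\kappa\mT(\vdelta)$, which gives scalability $F(\kappa\vdelta)<\kappa F(\vdelta)$. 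Positivity $F_i(\vzero)>0$ holds because $\mT\succ0$.

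\textbf{A priori bound.} The key bound is $\mT_{ii}\preceq(1+\delta_i)(\mSigma^{(i)})^{-1}$. Indeed, $\mT_{ii}$ is the inverse of a Schur complement of the block $\mM_{ii}+\mSigma^{(i)}/(1+\delta_i)$. That Schur complement dominates $\mSigma^{(i)}/(1+\delta_i)$, since the Schur complement of a PSD matrix ($\mM$ restricted appropriately, plus the PSD other diagonal blocks) is PSD. It follows that $F_i(\vdelta)\le c_i(1+\delta_i)$. At a fixed point this yields $\delta_i\le c_i/(1-c_i)$. The same bound shows that $F$ maps the box $\prod_i[0,c_i/(1-c_i)]$ into itself. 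Yates' theorem then gives existence, uniqueness, and convergence of the monotone iteration from $\vzero$. The iterates increase monotonically and stay bounded in the box.

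\textbf{Spectral radius.} This is the main step. First write $J_{ij}=\frac{1}{n_j(1+\delta_j)^2}\|\mSigma^{(i)1/2}\mT_{ij}\mSigma^{(j)1/2}\|_F^2\ge0$, so $\mJ$ is entrywise nonnegative. Next observe that $\mJ$ is a Jacobian-like object: $\partial F_i/\partial\delta_j=\frac1{n_i}\tr(\mSigma^{(i)}\mT_{ij}\mSigma^{(j)}\mT_{ji})/(1+\delta_j)^2$, which equals $\frac{n_j}{n_i}J_{ij}$. We then seek a positive vector $\vv$ with $\mJ^\top\vv\le(\max c)\vv$ or $\mJ\vv\le(\max c)\vv$, and conclude via Collatz--Wielandt.

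Take $v_j=1+\delta_j$ and compute $\sum_j J_{ij}'$ using the identity $\mT\mD\mT\preceq\mT$, which follows from $\mD\preceq\mM+\mD$. Weighted column sums give
\[
\sum_i n_i(1+\delta_i)^{-1}\ldots=\tfrac{1}{n_j(1+\delta_j)^2}\tr\big(\mSigma^{(j)}(\mT\mD\mT)_{jj}\big)(1+\delta_j)\le\tfrac{\delta_j}{1+\delta_j}.
\]
So a suitable diagonal similarity $\mJ\sim\mathrm{diag}(n_j(1+\delta_j))\,\mJ\,\mathrm{diag}(\cdot)^{-1}$ has column sums bounded by $\delta_j/(1+\delta_j)$. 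From the a priori bound, $\delta_j/(1+\delta_j)\le c_j$. This yields \eqref{eq:spectral_radius}. The delicate part is getting the weights right so that $\sum_i\mD_{ii}$-weighted traces reassemble exactly into $(\mT\mD\mT)_{jj}$.

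\textbf{Neumann series.} With $\rho(\mJ)<1$, \eqref{eq:stability} follows from the Neumann series. Nonnegativity holds because $\mJ\ge0$ entrywise, so every power $\mJ^k$ is nonnegative as well.
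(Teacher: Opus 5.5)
Your existence, uniqueness and convergence argument is the paper's in different packaging. Both rest on monotonicity, scalability and the bound $F_i(\vdelta)\le c_i(1+\delta_i)$; Yates' theorem replaces the paper's max-ratio contradiction. Your Schur-complement proof of $\mT_{ii}\preceq(1+\delta_i)(\mSigma^{(i)})^{-1}$ spells out what the paper leaves to ``matrix inverse monotonicity.'' One small fix: Yates needs \emph{strict} scalability, so write $\mM+\mD(\kappa\vdelta)\succ\kappa^{-1}(\mM+\mD(\vdelta))$ rather than $\preceq$. The strict inequality holds because $\mSigma^{(i)}\succ0$. Your spectral-radius idea, $\mT\mD\mT\preceq\mT$ plus a Collatz--Wielandt vector built from $1+\delta_j$, is also the paper's.

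The gap is that you never pin down the weights, and the version you state is wrong. Let $K_{ij}=\tr(\mSigma^{(i)}\mT_{ij}\mSigma^{(j)}\mT_{ji})$. The $j$th column sum of $\diag(n_k(1+\delta_k))\,\mJ\,\diag(n_k(1+\delta_k))^{-1}$ is $\sum_i n_i(1+\delta_i)K_{ij}/(n_j^2(1+\delta_j)^3)$. This does not reassemble into $(\mT\mD\mT)_{jj}$. It is also not bounded by $c_{\max}$ in general, since it grows like $n_i/n_j$; two strongly coupled nodes with $n_1\gg n_2$ already break it.

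Your displayed right-hand side is the correct quantity, but it belongs to a different matrix. It is the $j$th column sum of $\diag(1+\delta_k)^{-1}\mJ\,\diag(1+\delta_k)$, a step that needs the symmetry $K_{ij}=K_{ji}$. It is also the $j$th row sum of $\diag(n_k(1+\delta_k))^{-1}\mJ\,\diag(n_k(1+\delta_k))$. The row-sum version is exactly the paper's $\mJ_\delta\vomega\le\vdelta\le c_{\max}\vomega$ for the Jacobian $\mJ_\delta=\mD_n^{-1}\mJ\mD_n$.

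Your first choice $\vomega=(1+\delta_k)_k$ in fact works for $\mJ$ itself, with no similarity. Use $1/n_j=c_j/p\le c_{\max}/p$ together with your a priori bound:
\[
 (\mJ\vomega)_i=\sum_{j=1}^N\frac{K_{ij}}{n_j(1+\delta_j)}
 \le\frac{c_{\max}}{p}\tr\left(\mSigma^{(i)}(\mT\mD\mT)_{ii}\right)
 \le\frac{c_{\max}}{p}\tr\left(\mSigma^{(i)}\mT_{ii}\right)
 \le c_{\max}(1+\delta_i).
\]
Here $\mD=\blkdiag(\mSigma^{(j)}/(1+\delta_j))$. Any of these three versions closes the step. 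Entrywise nonnegativity and the Neumann series then go through as you wrote.
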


\subsection{Test Error: Homogeneous Case}
\label{subsec:homogeneous}

The previous result makes the test-error formulas well defined, as the fixed point uniquely determines $\mT$ and $\mJ$, and
\eqref{eq:stability} allows solving for $\vv$ in \eqref{eq:homvar} and \eqref{eq:generalvar}. 
Entrywise nonnegativity also ensures $\vv_{\mathrm{het}} \geq 0$ in \eqref{eq:varianceparts}.
The proof in Appendix~\ref{app:stability} applies to any bounded deterministic positive-semidefinite coupling matrix $\mM$, not just a graph Laplacian. It also gives the stability needed when differentiating the fixed point.
We denote the deterministic approximation to the conditional test error $\calE_i$ by $\overline{\calE}_i$.

\begin{corollary}[Homogeneous test error]\label{cor:homogeneous}
Under Assumption~\ref{ass:main} with class-wise means $\pm \vmu \in \R^{p}$, common covariance $\mSigma \in \R^{p \times p}$, local sample size $n$ and $\gamma=\vmu^\top\mSigma^{-1}\vmu$, the homogeneous test error is given by
\begin{equation}
    \overline{\calE}_i(\alpha)
 =\Phi\left(-\frac{\gamma}{\sqrt{\gamma+(1+\gamma)^2v_i}}\right),\label{eq:homerror}   
\end{equation}
where the vector $ \vv = [v_1, \dots, v_N]^\top \in \R^N$ is given by
\begin{equation}
 \vv = \frac{1}{1+\gamma}(\mI_N-\mJ)^{-1}\mJ\vone. \label{eq:homvar}
\end{equation}

Then $\calE_i(\alpha)-\overline{\calE}_i(\alpha)\to0$ in probability and in mean. 
\end{corollary}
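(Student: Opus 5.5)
The plan is to compute deterministic equivalents for the signal $\vmu^\top\whw^{(i)}$ and the noise $\whw^{(i)\top}\mSigma\whw^{(i)}$. Theorem~\ref{thm:concentration} already makes both concentrate around their conditional means at scale $p^{-1/2}$ on $\mathcal A$. Consistency then reduces to identifying these means asymptotically.

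\textbf{Setup.} Work with the signed data. With $\vz=y\vx\sim\mathcal N(\vmu,\mSigma)$, write $\mZ^{(i)}=\vmu\vone^\top+\mSigma^{1/2}\mG^{(i)}$ and $\vb^{(i)}=\tfrac1n\mZ^{(i)}\vone$. Then
\[
\whw=\mQ\vb,\qquad \mQ:=\big(\mM+\blkdiag(\tfrac1n\mZ^{(i)}\mZ^{(i)\top})\big)^{-1}.
\]
The standard leave-one-out trick helps here. Each sample $\vz_k$ of node $i$ enters $\mH$ through the rank-one term $\tfrac1n \ve_i\ve_i^\top\otimes\vz_k\vz_k^\top$. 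Sherman--Morrison then gives
\[
\mQ(\ve_i\otimes\vz_k)=\frac{\mQ_{-k}(\ve_i\otimes\vz_k)}{1+\tfrac1n\vz_k^\top(\mQ_{-k})_{ii}\vz_k}.
\]
The quadratic form concentrates around $\delta_i+\tfrac1n\vmu^\top(\cdot)\vmu$, and the mean part is lower order once it is tracked separately. This is the usual route to the deterministic equivalent $\E\mQ\approx\mT$ with the fixed point \eqref{eq:fixedpoint}, treating the rank-$N$ mean as a perturbation. Proposition~\ref{prop:stability} guarantees that the fixed point exists and that the error propagation through it is stable.

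\textbf{Signal.} Write $\whw=\tfrac1n\sum_{i,k}\mQ(\ve_i\otimes\vz_k)$ and apply the leave-one-out identity. Together with homogeneity (all $\mSigma^{(i)}=\mSigma$, $n_i=n$, so $\delta_i=\delta$ for all $i$), this gives $\E\whw^{(i)}\approx\kappa\,\mT$-weighted multiples of $\vmu$. In the homogeneous case $\sum_j\mT_{ij}$ reduces to $(1+\delta)\mSigma^{-1}$, because $\vone\otimes\cdot$ lies in the kernel of $\mM$. Hence the signal becomes a scalar function of $\gamma$: explicitly, $\vmu^\top\whw^{(i)}\to\gamma/(1+\gamma)$ after accounting for the $(1+\gamma)$ denominator coming from the mean in the Sherman--Morrison step.

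\textbf{Noise (main obstacle).} The hardest part is the noise $\whw^{(i)\top}\mSigma\whw^{(i)}$, which needs a second-order deterministic equivalent $\E[\mQ\mA\mQ]$ for $\mA=\ve_i\ve_i^\top\otimes\mSigma$. Here I would follow the standard argument: expand one resolvent via leave-one-out and collect terms. This yields the linear system $\vu=\vs+\mJ\vu$, with $\mJ$ as in \eqref{eq:J}. Its solution is $(\mI-\mJ)^{-1}$ applied to a source term, which Proposition~\ref{prop:stability} makes legitimate. Splitting the resulting variance into the part coming from the signal direction and the part coming from fluctuations should give
\[
\whw^{(i)\top}\mSigma\whw^{(i)}\to\frac{\gamma}{(1+\gamma)^2}+v_i ,\qquad \vv=\tfrac1{1+\gamma}(\mI-\mJ)^{-1}\mJ\vone .
\]
Getting the normalization exactly right, in particular the $1/(1+\gamma)$ factor from the mean and the $\gamma/(1+\gamma)^2$ term, is where I expect most of the bookkeeping effort.

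\textbf{Conclusion.} Substituting into \eqref{eq:testerror} gives
\[
\Phi\!\left(-\frac{\gamma/(1+\gamma)}{\sqrt{\gamma/(1+\gamma)^2+v_i}}\right)=\Phi\!\left(-\frac{\gamma}{\sqrt{\gamma+(1+\gamma)^2v_i}}\right),
\]
which is \eqref{eq:homerror}. Convergence in probability follows from Theorem~\ref{thm:concentration}, continuity of $\Phi$, and the fact that the noise stays bounded away from zero with high probability. Convergence in mean then follows from boundedness of $\Phi$ and dominated convergence.
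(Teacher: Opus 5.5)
\paragraph{Comparison with the paper.} The paper does not redo any random-matrix computation here. It specializes Theorem~\ref{thm:risk}. It checks that $\vone\otimes\vw_*$, with $\vw_*=\mSigma^{-1}\vmu/(1+\gamma)$, solves \eqref{eq:center}: the Laplacian term kills constant node vectors and $(\mSigma+\vmu\vmu^\top)\vw_*=\vmu$, so the weights $(1+\delta_i)^{-1}$ on both sides cancel node by node. It then substitutes $\vmu^\top\vm^{(i)}=\gamma/(1+\gamma)$, $\vm^{(i)\top}\mSigma\vm^{(i)}=\gamma/(1+\gamma)^2$ and $r_i=1/(1+\gamma)$ into \eqref{eq:generalvar} and \eqref{eq:generalrisk}. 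Your from-scratch route has genuine gaps.

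\paragraph{Signal step.} Your claim that homogeneity forces $\delta_i=\delta$ for all $i$ is false for a general graph, since $\delta_i=\frac1n\tr(\mSigma\mT_{ii})$ depends on the node's position. Take a three-node unit path with $\mSigma=\mI_p$ and $\alpha>0$. If all $\delta_i$ were equal to $\delta$, then $\delta_i$ would be proportional to the diagonal of $(2\alpha\mL^{(\mathcal G)}+\beta\mI_N)^{-1}$ with $\beta=1/(1+\delta)$. That diagonal is strictly larger at the endpoints than at the middle node, a contradiction. The paper obtains equal $\delta_i$ only under the eigenprojector symmetry of the Special Case. Hence your identity $\sum_j\mT_{ij}=(1+\delta)\mSigma^{-1}$ fails in general. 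What survives is $\mT\,\stack\{\mSigma\vc/(1+\delta_i)\}_{i=1}^N=\vone\otimes\vc$. Also, the factor $1/(1+\gamma)$ cannot come from the per-sample Sherman--Morrison denominator you wrote; as you note yourself, the mean contributes only $O(1/n)$ there. Using $\E\mQ\approx\mT$ literally in your leave-one-out expansion gives $\E\whw\approx\mT\,\stack\{\vmu/(1+\delta_i)\}=\vone\otimes\mSigma^{-1}\vmu$, i.e.\ a signal $\gamma$ instead of $\gamma/(1+\gamma)$. The mean is an $O(1)$ spike, so bilinear forms along $\vmu$ need one of two things. Either use the deterministic equivalent with $\mC=\mSigma+\vmu\vmu^\top$ in place of $\mSigma$, as in \eqref{eq:matrix_Q}. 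Or apply the explicit rank-$N$ Woodbury step \eqref{eq:woodbury}--\eqref{eq:q} to the centered resolvent.

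\paragraph{Noise step.} This is the real content of the corollary, and you assert it rather than derive it. The missing idea is that the source of the variance recursion is $\mJ\vr$, with $r_j=\E[(1-\vz^\top\vm^{(j)})^2]$ the population squared residual of the deterministic center at node $j$. The prefactor $1/(1+\gamma)$ appears only because homogeneity gives $\vm^{(j)}=\vw_*$ at every node, hence $\vr=\vone/(1+\gamma)$. Moreover, because the $\delta_j$ genuinely differ, $\mJ$ is not symmetric. You must track which index carries $(1+\delta_j)^{-2}$ to obtain $(\mI_N-\mJ)^{-1}\mJ\vone$ rather than the different vector $(\mI_N-\mJ^\top)^{-1}\mJ^\top\vone$, and your equal-$\delta$ simplification hides exactly this. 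The cleanest repair is to invoke Theorem~\ref{thm:risk}, which reduces the corollary to verifying the explicit solution $\vm=\vone\otimes\vw_*$ of \eqref{eq:center}.
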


The parameter $\gamma$ measures class separation and is independent
of $\alpha$. In the homogeneous case, collaboration affects the
asymptotic test error only through the variance correction $v_i$
in \eqref{eq:homerror}.

\paragraph{Special Case.} The formula simplifies further when the orthogonal projector onto
each eigenspace of $\mL^{(\mathcal G)}$ has constant diagonal; this condition holds, for example, for cycles and complete graphs
with uniform edge weights.
Then all $\delta_i$ equal a common $\delta$, and every row of
$\mJ$ has the same sum $\theta$. Consequently, all nodes have the
same predicted test error. Writing
$0=\lambda_1\le\cdots\le\lambda_N$ for the Laplacian eigenvalues,
listed with multiplicity, we obtain
\begin{align}
 \mQ_k
 &=\left(
 \frac{1}{1+\delta} \mSigma
 +2\alpha\lambda_k\mI_p
 \right)^{-1}\in\R^{p\times p},
 \nonumber\\
 \delta
 &=\frac1{nN}\sum_{k=1}^N
 \tr\left(\mSigma\mQ_k\right),
 \nonumber\\
 \theta
 &=\frac1{nN(1+\delta)^2}\sum_{k=1}^N
 \tr\left(\mSigma^2\mQ_k^2\right),
 \label{eq:theta}\\
 v_i
 &=\frac{\theta}{(1+\gamma)(1-\theta)},
 \nonumber\\
 \overline{\calE}_i
 &=\Phi\left(
 -\frac{\gamma\sqrt{1-\theta}}{\sqrt{\gamma+\theta}}
 \right).
 \label{eq:spectralrisk}
\end{align}
Here $0<\theta\le c=p/n<1$. The scalar $\theta$ plays the role of an effective aspect ratio in the error formula. In this
special case, the graph enters only through its Laplacian eigenvalues. The local and pooled endpoints admit simple expressions even
without this graph symmetry. For every connected graph,
\begin{align}
 \overline{\calE}_i(0)
 &=\Phi\left(
 -\frac{\gamma\sqrt{1-c}}{\sqrt{\gamma+c}}
 \right),
 \nonumber\\
 \lim_{\alpha\to\infty}\overline{\calE}_i(\alpha)
 &=\Phi\left(
 -\frac{\gamma\sqrt{1-c/N}}{\sqrt{\gamma+c/N}}
 \right).
 \label{eq:limits}
\end{align}
These terms correspond to least-squares classification with $n$ local observations and $Nn$ pooled observations,
respectively. In the spectral formula above, these endpoints correspond to $\theta=c$ and $\theta\to c/N$.
For disconnected graphs, the pooled expression uses the size of the node's connected component in place of $N$.

\begin{figure*}[t]
 \centering
 \includegraphics[width=.48\textwidth]{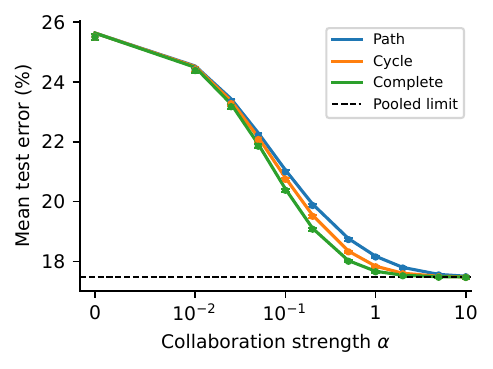}\hfill
 \includegraphics[width=.48\textwidth]{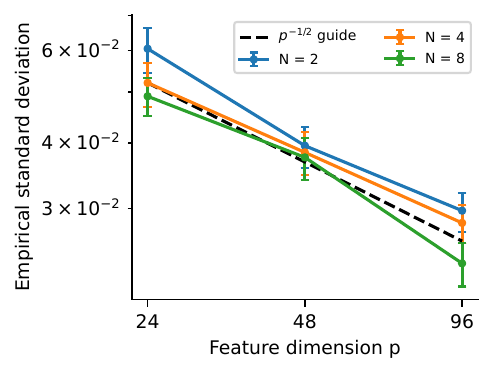}
 \caption{Homogeneous data and concentration, with common $\mSigma=\mI_p$ and $\vmu=\ve_1$. \textbf{Left:} node-average test error with $N=6$, $p=60$ and $n_i=150$. Path, cycle and complete graphs each have total edge weight 6. Lines: theoretical predictions; points: Monte Carlo means. \textbf{Right:} standard deviation of the unit-norm observable $f(\whw)=N^{-1/2}\sum_{i=1}^N\ve_1^\top\whw^{(i)}$ on an unweighted cycle, with $N\in\{2,4,8\}$, $p\in\{24,48,96\}$, $n_i=2.5p$ and $\alpha=0.3$. The $p^{-1/2}$ guide is anchored at $N=4$, $p=24$. Points use \NumGaussianRepeats\ independent training draws; error bars are 95\% intervals, with bootstrap intervals for the standard deviations.}
 \label{fig:homogeneous}
\end{figure*}

\subsection{Test Error: Heterogeneous Case}
\label{subsec:heterogeneous}

We now allow means, covariances, and sample sizes to differ
across nodes. We keep using the the fixed-point solution $\vdelta$ and
the matrix $\mJ$ from
\eqref{eq:matrix_T}--\eqref{eq:J}.
Recall that
$\mC^{(i)}=\mSigma^{(i)}+\vmu^{(i)}\vmu^{(i)\top}$
is the second-moment matrix at node $i$.
Define $\mQ\in\R^{Np\times Np}$ and $\vm\in\R^{Np}$ by
\begin{align}
 \mQ
 &=\left(
 \mM+\blkdiag\left(
 \frac{\mC^{(i)}}{1+\delta_i}
 \right)\right)^{-1},
 \label{eq:matrix_Q}\\
 \vm
 &=\mQ\stack\left\{
 \frac{\vmu^{(i)}}{1+\delta_i}
 \right\}_{i=1}^N.
 \label{eq:center}
\end{align}
The vector $\vm$ determines the approximations to linear
functions of the fitted coefficients, as stated in
Theorem~\ref{thm:risk} below. Its block
$\vm^{(i)}\in\R^p$ depends on the distributions across the
graph and can differ from the local least-squares coefficient
$\vw_*^{(i)}$.
To compute the remaining quadratic contribution to test error,
define
\begin{align}
 r_i
 &=\left(
 1-\vmu^{(i)\top}\vm^{(i)}
 \right)^2
 +\vm^{(i)\top}\mSigma^{(i)}\vm^{(i)},
 \nonumber\\
 \vv
 &=\left(\mI_N-\mJ\right)^{-1}\mJ\vr,
 \label{eq:generalvar}
\end{align}
where $\vr=\left[r_1,\ldots,r_N\right]^\top$
and $\vv=\left[v_1,\ldots,v_N\right]^\top$.
Here
\begin{equation*}
    r_i =   \E  \left[  \left(y-\vx^\top\vm^{(i)}\right)^2    \right]
\end{equation*}
is the expected squared prediction error of $\vm^{(i)}$ on a fresh observation from node $i$.
The nonnegative correction $v_i$ accounts for training-sample fluctuations in the quadratic term $\whw^{(i)\top}\mSigma^{(i)}\whw^{(i)}$.

\begin{theorem}[Heterogeneous test error]\label{thm:risk}
Under Assumption~\ref{ass:main}, for every node $i$, it holds that
\begin{align}
 \vmu^{(i)\top}\whw^{(i)}-\vmu^{(i)\top}\vm^{(i)}&\xrightarrow{\P}0,\label{eq:signalDE}\\
 \whw^{(i)\top}\mSigma^{(i)}\whw^{(i)}
       -\vm^{(i)\top}\mSigma^{(i)}\vm^{(i)}-v_i&\xrightarrow{\P}0.\label{eq:noiseDE}
\end{align}
Consequently, the heterogeneous test error is given by
\begin{equation}
 \overline{\calE}_i(\alpha)=\Phi\left(
 -\frac{\vmu^{(i)\top}\vm^{(i)}}{\sqrt{\vm^{(i)\top}\mSigma^{(i)}\vm^{(i)}+v_i}}\right),\label{eq:generalrisk}
\end{equation}
and we have $\calE_i-\overline{\calE}_i\to0$ in probability and in mean. The denominator in \eqref{eq:generalrisk} is asymptotically bounded below by a positive constant for any values of $\alpha$.
\end{theorem}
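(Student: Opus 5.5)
The plan is to write $\whw=\mH^{-1}\vb$ in terms of the signed data $\mZ^{(i)}=[\vz_1^{(i)},\dots,\vz_{n_i}^{(i)}]$, with columns $\vz=y\vx\sim\mathcal N(\vmu^{(i)},\mSigma^{(i)})$ i.i.d. Then $\mB^{(i)}=\frac1{n_i}\mZ^{(i)}\mZ^{(i)\top}$ and $\vb^{(i)}=\frac1{n_i}\mZ^{(i)}\vone_{n_i}$, so
\begin{equation*}
\whw=\left(\mM+\blkdiag\left(\tfrac1{n_i}\mZ^{(i)}\mZ^{(i)\top}\right)\right)^{-1}\stack\left\{\tfrac1{n_i}\mZ^{(i)}\vone\right\}.
\end{equation*}
This is a block resolvent with a non-centered sample covariance at each node. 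The first step is a deterministic equivalent for the resolvent $\mR:=(\mM+\mB)^{-1}$ in the form $\mR\leftrightarrow\mQ$ with $\mQ$ from \eqref{eq:matrix_Q}, tested against deterministic matrices of bounded trace norm. To derive it I would use Gaussian integration by parts, or the leave-one-out Sherman--Morrison identity. For each sample $\vz_k^{(i)}$ at node $i$,
\begin{equation*}
\mR\,\ve_i\otimes\vz_k^{(i)}=\frac{\mR_{-k}\,\ve_i\otimes\vz_k^{(i)}}{1+\frac1{n_i}\vz_k^{(i)\top}(\mR_{-k})_{ii}\vz_k^{(i)}}.
\end{equation*}
The quadratic form in the denominator concentrates around $\frac1{n_i}\tr(\mSigma^{(i)}\E\mR_{ii})+\frac1{n_i}\vmu^{(i)\top}\E\mR_{ii}\vmu^{(i)}$, and the second term is $O(1/p)$. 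This produces $\delta_i$ through \eqref{eq:fixedpoint}. The usual resolvent-identity argument then gives $\E\mR\approx\mQ$. Replacing $\mQ$ by $\mT$ in the trace equation only costs a rank-one perturbation per block, hence $O(1/p)$. Proposition~\ref{prop:stability} supplies uniqueness of $\vdelta$ and stability of the perturbed fixed point, so the approximate equations yield closeness to the exact solution. On the event $\mathcal A$ of Theorem~\ref{thm:concentration}, $\|\mR\|$ is bounded by $1/h$ uniformly in $\alpha$, since $\mM\succeq0$ and each $\mB^{(i)}\succeq h\mI$ up to the mean part. This makes all the error terms uniform in $\alpha$.

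For the signal \eqref{eq:signalDE}, I would apply the same leave-one-out identity to $\vb$:
\begin{equation*}
\whw=\sum_{i,k}\frac1{n_i}\mR\,\ve_i\otimes\vz_k^{(i)}=\sum_{i,k}\frac{1}{n_i}\frac{\mR_{-k}\,\ve_i\otimes\vz_k^{(i)}}{1+\delta_i+o(1)}.
\end{equation*}
Its expectation is therefore $\approx\mQ\,\stack\{\vmu^{(i)}/(1+\delta_i)\}=\vm$ in the sense of linear functionals with bounded vectors. Theorem~\ref{thm:concentration} applied to the $\|\vmu^{(i)}\|$-Lipschitz map $\vw\mapsto\vmu^{(i)\top}\vw^{(i)}$ upgrades this to convergence in probability.

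For the noise \eqref{eq:noiseDE}, write $\whw^{(i)\top}\mSigma^{(i)}\whw^{(i)}=\vb^\top\mR\,\mD_i\,\mR\,\vb$ with $\mD_i=\ve_i\ve_i^\top\otimes\mSigma^{(i)}$. This needs a second-order deterministic equivalent for $\mR\mD\mR$. Expanding each resolvent with leave-one-out identities yields a self-consistent linear system. Its diagonal-block traces obey $\tfrac1{n_j}\tr\mSigma^{(j)}(\mR\mD_i\mR)_{jj}\approx\sum_\ell J_{j\ell}(\cdots)+\text{source}$, which is exactly the structure $(\mI_N-\mJ)^{-1}$ with $\mJ$ from \eqref{eq:J}. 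The source term at node $j$ turns out to be the residual $r_j=\E(1-\vz^\top\vm^{(j)})^2$. It arises because the $k$-th summand of $\vb$ and the "self" fluctuation of $\mR$ in direction $\vz_k$ combine into $(1-\vz_k^\top\whw_{-k}^{(j)})$, whose squared mean is $r_j$. This gives $\vm^{(i)\top}\mSigma^{(i)}\vm^{(i)}+[(\mI-\mJ)^{-1}\mJ\vr]_i$. Concentration follows from Theorem~\ref{thm:concentration}, because the quadratic form is Lipschitz on the bounded set $\{\|\whw\|\le C\}$, which contains $\whw$ on $\mathcal A$. \textbf{This second-order step is the main obstacle}: tracking the cross terms between blocks $i\neq j$ through $\mM$, and showing that the mean-induced rank-one corrections do not contribute beyond $\vr$. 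I would carry it out first for $\vmu^{(i)}=\vzero$, then add the means via a Woodbury update of $\mQ$ versus $\mT$.

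Finally, to get \eqref{eq:generalrisk}, I would show the denominator is bounded below. Since $\mSigma^{(i)}\succeq s_1\mI$ and $v_i\ge0$, it suffices to lower-bound $\vm^{(i)\top}\mSigma^{(i)}\vm^{(i)}+v_i$. If $\vm^{(i)}$ is small, then $r_i\ge(1-\vmu^{(i)\top}\vm^{(i)})^2$ is near $1$. Moreover $v_i\ge J_{ii}r_i$, and $J_{ii}$ is bounded below using the bounds on $\delta_i$ and on the spectrum of $\mT$. With the denominator controlled, continuity of $\Phi$ and of the ratio gives $\calE_i-\overline{\calE}_i\to0$ in probability, and boundedness of $\Phi$ gives convergence in mean.
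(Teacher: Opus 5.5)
Your first-order equivalent and your signal argument are sound. By Woodbury, $\mQ=\mT-\mT\mU(\diag(1+\delta_1,\ldots,1+\delta_N)+\mGamma)^{-1}\mU^\top\mT$, so your non-centered $\mQ$ is consistent with the paper's $\mT$, and $\vm=\mT\mU\vq$. The gap is the quadratic term \eqref{eq:noiseDE}. This is the real content of the theorem, and you leave it open. Moreover, the mechanism you sketch for it is miscounted. Since $\vz_k$ is independent of $\whw_{-k}$, we have $\E[(1-\vz_k^\top\whw_{-k}^{(j)})^2\mid\whw_{-k}]=(1-\vmu^{(j)\top}\whw_{-k}^{(j)})^2+\whw_{-k}^{(j)\top}\mSigma^{(j)}\whw_{-k}^{(j)}$. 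This tends to $r_j+v_j$, not $r_j$, because the quadratic term of $\whw_{-k}$ carries its own correction $v_j$; for $N=1$ it is the classical $r_1/(1-c_1)$. With that source the weights must be $\mJ$ itself, giving the closed relation $\vv=\mJ(\vr+\vv)$, which is equivalent to \eqref{eq:generalvar}. Combining it with the second-order weights $(\mI_N-\mJ)^{-1}\mJ$, as your sketch does, counts the variance twice. Your fallback does not remove the difficulty either. Even for $\vmu^{(i)}=\vzero$, $\vb^{(i)}$ is the sample mean and enters $\mB^{(i)}$ as the rank-one term $\vb^{(i)}\vb^{(i)\top}$, so $\vb$ and $(\mM+\mB)^{-1}$ remain dependent. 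And a Woodbury relation between the deterministic matrices $\mQ$ and $\mT$ says nothing about the random form $\vb^\top(\mM+\mB)^{-1}\mP_i(\mM+\mB)^{-1}\vb$ (your $\mD_i$ is the paper's $\mP_i$).

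The missing idea is to perform the Woodbury step on the random matrices, using the Gaussian fact of Lemma~\ref{lem:decomposition}: the centered covariance $\mS^{(i)}$ is independent of the sample mean $\vb^{(i)}=\vmu^{(i)}+\veta^{(i)}$. Write $\mB=\blkdiag(\mS^{(i)})+\mF\mF^\top$ and $\vb=\mF\vone$ with $\mF=\mU+\mV\in\R^{Np\times N}$. Then exactly $\whw=\mR\mF(\mI_N+\mF^\top\mR\mF)^{-1}\vone$, where the centered resolvent $\mR=(\blkdiag(\mS^{(i)})+\mM)^{-1}$ is independent of $\mF$. After freezing the $N\times N$ inverse at the deterministic $\vq$, the noise term is a quadratic form in the independent Gaussian vectors $\veta^{(j)}$ against $\mR\mP_i\mR$. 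Only the second-order equivalent of Lemma~\ref{lem:quadratic} for the centered resolvent is then needed, and no leave-one-out residuals appear. The source $r_j$ emerges algebraically as $(1+\delta_j)^2q_j^2+\vm^{(j)\top}\mSigma^{(j)}\vm^{(j)}$, via \eqref{eq:qidentity} and $\vm=\mT\mU\vq$.

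A smaller gap concerns the denominator. As $\alpha\to\infty$, $\lambda_{\min}(\mT)\to0$, so bounds on the spectrum of $\mT$ give no $\alpha$-uniform lower bound on $J_{ii}$. You need a bound on the diagonal block instead. Restricting the variational formula for $\vx^\top\mT_{ii}\vx$ to consensus vectors $\vone\otimes\vc\in\ker\mM$ gives $\mT_{ii}\succeq(\sum_j\mSigma^{(j)}/(1+\delta_j))^{-1}\succeq(Ns_2)^{-1}\mI_p$. With that bound your case split on $\|\vm^{(i)}\|_2$ works, although $r_i\ge1/(1+\gamma_i)$ from \eqref{eq:residualrisk} makes it unnecessary.
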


\begin{figure*}[t]
 \centering\includegraphics[width=.97\textwidth]{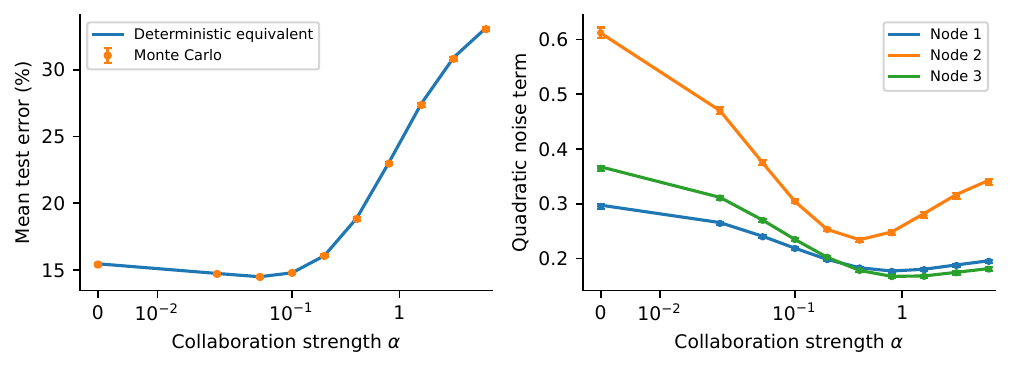}
 \caption{Heterogeneous three-node path with $p=80$, $n_i=200$, $\vmu^{(1)}=1.4\ve_1$, $\vmu^{(2)}=-0.4\ve_1+\ve_2$ and $\vmu^{(3)}=-0.7\ve_2+0.8\ve_3$. Covariances are diagonal: entries of $\mSigma^{(1)}$ and $\mSigma^{(2)}$ are evenly spaced from $0.5$ to $1.5$ and from $1.4$ to $0.6$, respectively; $\mSigma^{(3)}$ has half its diagonal entries equal to $0.4$ and half equal to $2$. \textbf{Left:} predicted and measured node-average test error. \textbf{Right:} node-specific quadratic noise terms. Lines: Theorem~\ref{thm:risk}; points and 95\% intervals: \NumGaussianRepeats\ independent training draws, shared across values of $\alpha$.}
 \label{fig:heterogeneous}
\end{figure*}

To interpret the effect of collaboration in
Theorem~\ref{thm:risk}, we compare the deterministic coefficient
$\vm^{(i)}$ with the optimal local least-squares coefficient,
\begin{align}
 \vw_*^{(i)}
 &=\argmin_{\vw\in\R^p}
 \E\left[\left(y-\vx^\top\vw\right)^2\right]
 \nonumber\\
 &=\frac{\left(\mSigma^{(i)}\right)^{-1}\vmu^{(i)}}
 {1+\gamma_i}.
 \label{eq:population}
\end{align}
The residual $r_i$ in \eqref{eq:generalvar} then separates
into the minimum local squared prediction error and the
additional error caused by $\vm^{(i)}$ differing from
$\vw_*^{(i)}$,
\begin{align}
 r_i
 &=\frac1{1+\gamma_i}+d_i,
 \nonumber\\
 d_i
 &=\left(\vm^{(i)}-\vw_*^{(i)}\right)^\top
 \mC^{(i)}
 \left(\vm^{(i)}-\vw_*^{(i)}\right)\ge0.
 \label{eq:residualrisk}
\end{align}

Writing $\vd=\left[d_1,\ldots,d_N\right]^\top$, we can decompose \eqref{eq:generalvar} as
\begin{equation}
    \vv = \vv_{\mathrm{base}} + \vv_{\mathrm{het}} \; \in \R^{N},
    \label{eq:varianceparts}
\end{equation}
where the two vectors $\vv_{\mathrm{base}}$ and $\vv_{\mathrm{het}}$ are given by   
\begin{align}
    \vv_{\mathrm{base}} = & \left(\mI_N-\mJ\right)^{-1}\mJ \stack \left \{ \frac1{1+\gamma_i} \right\}_{i=1}^N, \label{eq:v_base} \\
    \vv_{\mathrm{het}} = & \left(\mI_N-\mJ\right)^{-1}\mJ\vd. \label{eq:v_het}
\end{align}
By Proposition~\ref{prop:stability}, $\vv_{\mathrm{het}}$
is entrywise nonnegative: mismatch adds to the quadratic correction.
For $\vw_*=\stack\left\{\vw_*^{(i)}\right\}_{i=1}^N$,
\eqref{eq:center} also gives
\begin{equation}
 \vm-\vw_*
 =-2\alpha\mQ
 \left(\mL^{(\mathcal G)}\otimes\mI_p\right)\vw_*.
 \label{eq:biasidentity}
\end{equation}

For $\alpha>0$, $\vm=\vw_*$ (equivalently, $d_i=0$
for every node) holds exactly when the local least-squares
coefficients $\vw_*^{(i)}$ agree within each connected
component. The data distributions themselves may still differ.

\section{Experiments}
\label{sec:experiments}

We first confirm our theoretical predictions on Gaussian data (Section~\ref{sec:gaussian_experiments}), then assess their accuracy and use for choosing $\alpha$ on real data (Section~\ref{sec:real}).

\subsection{Gaussian Data}
\label{sec:gaussian_experiments}

Figure~\ref{fig:homogeneous} (left) shows close agreement between the predicted and measured test errors on three graph structures. Collaboration reduces the homogeneous test error towards the pooled error, with the graph determining how quickly it is approached. The right panel shows fluctuations of a Lipschitz-continuous observable, confirming the $p^{-1/2}$ scale of Theorem~\ref{thm:concentration}.

The heterogeneous example in Figure~\ref{fig:heterogeneous} has an intermediate best collaboration strength. Beyond this point, the error initially rises while the quadratic noise still decreases: the loss of local signal outweighs the noise reduction. Stronger collaboration eventually also increases the quadratic noise.

Figure~\ref{fig:alignment} isolates the importance of connecting similar tasks. The two graphs have identical Laplacian spectra but different test errors. Grouping similar tasks makes stronger collaboration useful, and the theory captures both error curves and their preferred collaboration strengths.  

\begin{figure*}[t]
 \centering
 \includegraphics[width=.9\textwidth]{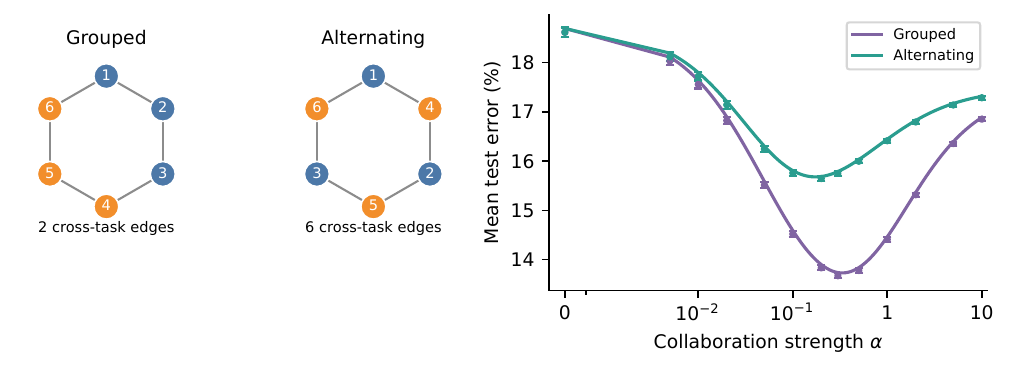}
 \caption{Six nodes with $p=60$, $n_i=150$ and $\mSigma^{(i)}=\mI_p$. Nodes $1,2,3$ have mean $\ve_1+0.8\ve_2$ (blue), and nodes $4,5,6$ have mean $\ve_1-0.8\ve_2$ (orange). Unit-weight cycles follow the orders $(1,2,3,4,5,6)$ and $(1,4,2,5,3,6)$, giving identical Laplacian spectra but two and six edges between different tasks, respectively. Lines: Theorem~\ref{thm:risk}. Points: means over \NumGaussianRepeats\ independent training draws, shared across graphs and values of $\alpha$, with 95\% intervals.}
 \label{fig:alignment}
\end{figure*}

\subsection{Real-World Data}
\label{sec:real}

We evaluate 55 settings on MNIST \citep{lecun1998gradient}, Fashion-MNIST \citep{xiao2017fashion}, CIFAR-10 \citep{krizhevsky2009learning}, HAR \citep{anguita2013public} and Adult \citep{kohavi1996scaling}. These include random splits, controlled feature and label shifts, and natural user or silo partitions. Figure~\ref{fig:calib} and Table~\ref{tab:family} show that the predictions remain useful approximations beyond the Gaussian setting.

\begin{figure}[H]
\centering
\includegraphics[width=\columnwidth]{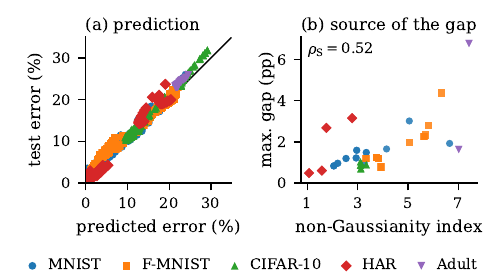}
\caption{Real-data predictions. Each node's data are split into a pool (70\%) and a held-out test set (30\%). Predictions use the signed-observation moments of the pool; measured errors average 100 independent training draws from it, shared across the $\alpha$ grid. (a) Predicted versus held-out test error for all 55 settings and grid values. (b) Largest gap to~\eqref{eq:testerror} versus the spread of whitened squared norms relative to its Gaussian value (index 1). Feature, partition and graph details are given in Appendix~\ref{app:real} and the code.}
\label{fig:calib}
\end{figure}

The prediction gaps tend to increase with departures from Gaussian data (Figure~\ref{fig:calib}b). Gaussian simulations with matching means and covariances agree more closely with the theory, suggesting that these departures explain much of the discrepancy. Held-out error can also differ from~\eqref{eq:testerror} because real-data classification scores need not be Gaussian. 

\begin{table}[h]
\centering\small
\caption{Prediction gaps and tuning regrets on 55 real-data settings, in percentage points (pp). Gap: median across settings of the largest absolute prediction error over the $\alpha$ grid. ``Gauss'' evaluates~\eqref{eq:testerror} on the fitted coefficients with pool moments; ``test'' uses held-out classification error. Regret: extra held-out error at the theory-selected $\alpha$ relative to the lowest mean error on the grid, summarized by its median and maximum. Predictions use pool moments.}
\label{tab:family}
\vspace{2pt}
\setlength{\tabcolsep}{4.5pt}
\begin{tabular}{@{}lrrrrr@{}}
\toprule
 & & \multicolumn{2}{c}{Gap (pp)}
   & \multicolumn{2}{c}{Regret (pp)} \\
\cmidrule(lr){3-4}\cmidrule(l){5-6}
Group & \# & Gauss & test & med. & max \\
\midrule
i.i.d.\ splits       & 9  & 0.8 & 1.8 & 0.07 & 0.17 \\
Dirichlet, $a=100$   & 6  & 1.0 & 2.0 & 0.12 & 0.18 \\
Dirichlet, $a\le0.5$ & 12 & 1.1 & 2.5 & 0.06 & 0.25 \\
Rotations           & 12 & 2.1 & 2.3 & 0.05 & 0.24 \\
Task alignment      & 4  & 1.1 & 2.2 & 0.02 & 0.18 \\
HAR users           & 8  & 1.6 & 3.2 & 0.14 & 1.29 \\
Adult silos         & 4  & 4.2 & 1.6 & 0.15 & 0.19 \\
\midrule
All                 & 55 & 1.2 & 2.3 & 0.09 & 1.29 \\
\bottomrule
\end{tabular}
\end{table}

\newpage

Choosing $\alpha$ by minimizing the predicted error usually gives a test error close to the best grid value (Table~\ref{tab:family}). With moments estimated only from the training data, the resulting rule also performs similarly to cross-validation (Table~\ref{tab:tuning}). Comparisons with local training, consensus, fine-tuning and global/local mixtures show that the useful strength and form of collaboration depend on the differences between nodes: graphs following the task structure perform well on the heterogeneous benchmarks, whereas consensus is best on nearly homogeneous Adult.

\begin{table}[h]
\centering
\caption{Test error (\%) on six real-data settings, averaged over 20 training draws.  Plug-in theory uses training-sample means and Ledoit--Wolf covariance estimates; oracle theory uses pool moments. Fine-tuning starts from consensus; mixture is the global/local method of \citet{hanzely2020federated}. The 95\% intervals are at most $\pm0.4$, except $\pm0.7$ for local HAR. Bold: lowest reported error excluding hindsight, which gives the lowest measured GTVMin error on the tested grid. Fore the setup details, see Appendix~\ref{app:real}.}
\label{tab:tuning}
\vspace{2pt}
\setlength{\tabcolsep}{3pt}
\resizebox{\columnwidth}{!}{%
\begin{tabular}{@{}lcccccc@{}}
\toprule
Method
 & \makecell{MNIST\\rot.}
 & \makecell{FMN.\\rot.}
 & \makecell{CIFAR\\subp.}
 & \makecell{CIFAR\\Dir.}
 & \makecell{HAR\\users}
 & \makecell{Adult\\silos} \\
\midrule
Local
 & 10.97 & 22.32 & 17.35 & 15.41 & 20.77 & 25.77 \\
Consensus
 & 18.43 & 21.88 & 12.19 & 11.43 & 14.85 & \textbf{23.15} \\
Fine-tuning
 & 9.44 & 17.58 & 11.88 & 10.65 & 14.49 & 23.44 \\
Mixture
 & 9.08 & \textbf{17.35} & 11.84 & 10.58 & 14.74 & 23.56 \\
\midrule
\multicolumn{7}{@{}l}{\emph{GTVMin, $\alpha$ chosen by}} \\
\quad CV (5-fold)
 & 8.47 & 17.90 & 11.64 & 10.52 & 14.37 & 23.51 \\
\quad Plug-in theory
 & \textbf{8.37} & 17.86 & 11.81 & 10.44 & 14.39 & 23.78 \\
\quad Oracle theory
 & \textbf{8.37} & 17.83 & \textbf{11.53}
 & \textbf{10.37} & \textbf{14.22} & 23.29 \\
\midrule
\quad Best (hindsight)
 & \textit{8.29} & \textit{17.73} & \textit{11.45}
 & \textit{10.25} & \textit{13.91} & \textit{22.95} \\
\bottomrule
\end{tabular}}
\end{table}

\clearpage
\begingroup
\raggedbottom
\setlength{\bibsep}{2pt}
\bibliographystyle{plainnat}
\bibliography{references}
\endgroup
\onecolumn
\appendix

\section{Gaussian Decomposition and Spectral Bounds}
\label{app:preliminaries}
\label{app:signed}
\label{app:gap}

The test-error proof uses independent Gaussian sample means and centered covariances, together with a bound on the inverses. Unless stated otherwise, Assumption~\ref{ass:main} holds, $N$ is fixed, and limits are taken as $p,n_i\to\infty$. Deterministic parameters, including $\alpha$, may depend on $p$. 
We use $\|\,\cdot\,\|_{2\to2}$ for the operator norm, and $\|\,\cdot\,\|_F$ denotes the Frobenius norm.

Write $\vz_r^{(i)}=y_r^{(i)}\vx_r^{(i)}$ and $\mZ^{(i)}=[\vz_1^{(i)},\ldots,\vz_{n_i}^{(i)}]\in\R^{p\times n_i}$. The sample mean and centered covariance are
\begin{equation}
 \vb^{(i)}=\frac1{n_i}\mZ^{(i)}\vone,
 \qquad
 \mS^{(i)}=\frac1{n_i}\mZ^{(i)}\mZ^{(i)\top}
 -\vb^{(i)}\vb^{(i)\top}.
 \label{eq:scatter}
\end{equation}

\begin{lemma}[Gaussian decomposition]
\label{lem:decomposition}
The sample-mean fluctuations $\veta^{(i)}:=\vb^{(i)}-\vmu^{(i)}$ are independent of all centered covariances, and different nodes are independent. Moreover,
\begin{equation}
 \veta^{(i)}\sim\mathcal N\left(\vzero,\frac{\mSigma^{(i)}}{n_i}\right),
 \qquad
 \mS^{(i)}\overset{d}{=}\frac1{n_i}
 \left(\mSigma^{(i)}\right)^{1/2}\mG^{(i)}\mG^{(i)\top}
 \left(\mSigma^{(i)}\right)^{1/2},
 \qquad
 \mB^{(i)}=\mS^{(i)}+\vb^{(i)}\vb^{(i)\top},
 \label{eq:samplemean}
\end{equation}
where $\mG^{(i)}\in\R^{p\times(n_i-1)}$ has independent standard Gaussian entries. Here $\overset{d}{=}$ means equality in distribution.
\end{lemma}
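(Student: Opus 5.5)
The plan is to reduce everything to standard facts about i.i.d.\ Gaussian samples, treating each node separately since nodes are independent by assumption. Fix node $i$. The signed observations $\vz_r^{(i)}=y_r^{(i)}\vx_r^{(i)}$ are i.i.d.\ with $\vz_r^{(i)}\sim\mathcal N(\vmu^{(i)},\mSigma^{(i)})$. Indeed, conditionally on $y_r^{(i)}=y$ we have $\vx\sim\mathcal N(y\vmu^{(i)},\mSigma^{(i)})$. Hence $y\vx\sim\mathcal N(\vmu^{(i)},\mSigma^{(i)})$, using $y^2=1$ and the symmetry of the centered Gaussian part. Since this conditional law does not depend on $y$, the labels drop out, whatever the class proportions are. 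I first state this reduction, conditioning on the labels if they are fixed and integrating them out if they are random. Independence across nodes follows because the training sets of different nodes are independent and every quantity in the lemma is a measurable function of a single node's data.

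Next I apply the classical Gaussian sample-mean/sample-covariance theorem via a Helmert rotation. Write $\vz_r^{(i)}=\vmu^{(i)}+(\mSigma^{(i)})^{1/2}\vg_r$ with $\vg_r$ i.i.d.\ $\mathcal N(\vzero,\mI_p)$, and let $\mW=[\vg_1,\dots,\vg_{n_i}]$. Choose an orthogonal $\mO\in\R^{n_i\times n_i}$ whose first column is $\vone/\sqrt{n_i}$. By rotational invariance, $\mW\mO=[\sqrt{n_i}\,\bar\vg,\mG^{(i)}]$ has i.i.d.\ standard Gaussian columns, so $\bar\vg$ is independent of $\mG^{(i)}\in\R^{p\times(n_i-1)}$. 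Then $\veta^{(i)}=(\mSigma^{(i)})^{1/2}\bar\vg\sim\mathcal N(\vzero,\mSigma^{(i)}/n_i)$. Expanding $\frac1{n_i}\mZ^{(i)}\mZ^{(i)\top}-\vb^{(i)}\vb^{(i)\top}$, the mean terms cancel and we get $\frac1{n_i}(\mSigma^{(i)})^{1/2}(\mW\mW^\top-n_i\bar\vg\bar\vg^\top)(\mSigma^{(i)})^{1/2}$. Because $\mW\mW^\top=\mW\mO\mO^\top\mW^\top=n_i\bar\vg\bar\vg^\top+\mG^{(i)}\mG^{(i)\top}$, this equals $\frac1{n_i}(\mSigma^{(i)})^{1/2}\mG^{(i)}\mG^{(i)\top}(\mSigma^{(i)})^{1/2}$. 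This identity even holds pathwise, not just in distribution, and it gives independence of $\veta^{(i)}$ from $\mS^{(i)}$. The identity $\mB^{(i)}=\mS^{(i)}+\vb^{(i)}\vb^{(i)\top}$ is just the definition \eqref{eq:scatter} rearranged, since $\mB^{(i)}=\frac1{n_i}\mX^{(i)}\mX^{(i)\top}=\frac1{n_i}\mZ^{(i)}\mZ^{(i)\top}$ (again because $y^2=1$).

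The only delicate step is the label reduction when class sizes are fixed and unequal. I would handle it by conditioning on the label vector throughout. All conclusions hold conditionally with a law that does not depend on the labels, so they hold unconditionally as well. Everything else is routine linear algebra.
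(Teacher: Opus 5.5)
Your proof is correct and follows the paper's route. You first reduce to i.i.d.\ $\mathcal N(\vmu^{(i)},\mSigma^{(i)})$ signed observations irrespective of the class proportions, then use an orthogonal transformation of the sample indices with first column $n_i^{-1/2}\vone$ to split the sample mean from $n_i-1$ independent centered Gaussian columns; compared with the paper, you merely write out the algebra it leaves implicit, including the pathwise identity for $\mS^{(i)}$ and the use of $y^2=1$ to get $\mB^{(i)}=\frac1{n_i}\mZ^{(i)}\mZ^{(i)\top}$.
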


\begin{proof}[Proof of Lemma~\ref{lem:decomposition}]
Under \eqref{eq:model}, the signed observations are independent $\mathcal N\left(\vmu^{(i)},\mSigma^{(i)}\right)$ vectors, irrespective of the class proportions. An orthogonal transformation of the sample indices, with first coordinate $n_i^{-1/2}\vone$, separates the sample mean from $n_i-1$ independent centered Gaussian columns. This proves the distributional and independence statements; the identity for $\mB^{(i)}$ follows from \eqref{eq:scatter}.
\end{proof}

\begin{remark} 
\label{rem:covariance-bounds}
Standard Gaussian singular-value bounds \citep{Vershynin_2026} and the aspect-ratio gap give constants $h,K>0$ such that, with probability tending to one,
\begin{equation}
 \lambda_{\min}\left(\mS^{(i)}\right)\ge h,
 \qquad
 \left\|\mZ^{(i)}\right\|_{2\to2}\le K\sqrt{n_i},
 \qquad i=1,\ldots,N.
 \label{eq:localgap}
\end{equation}
The constants depend only on the distributional bounds and the aspect-ratio gap. On the event (in the limit $p \to \infty$)
\begin{equation}
 \mathcal B=\left\{\min_{1\le i\le N}
 \lambda_{\min}\left(\mS^{(i)}\right)\ge h\right\},
 \qquad \P\left(\mathcal B\right)\longrightarrow1,
 \label{eq:scatterevent}
\end{equation}
both $\left(\blkdiag\left(\mS^{(i)}\right)+\mM\right)^{-1}$ and $\mH^{-1}$, recalling \eqref{eq:def_H}, have operator norm at most $h^{-1}$ for every $\mM\succeq0$. This suffices for the test-error proof. The upper data bound and the exponential probability bound behind \eqref{eq:localgap} are used only for observable concentration.
\end{remark}

\section{The Fixed Point and Deterministic Equivalents}
\label{app:stability}

The fixed point approximates an inverse; its derivative supplies the correction needed for quadratic forms. We prove stability before giving these approximations. Here $\mM\in\R^{Np\times Np}$ may be any deterministic positive-semidefinite matrix; the test-error proofs use \eqref{eq:matrix_M}.

\subsection{Existence and stability}

\begin{proof}[Proof of Proposition~\ref{prop:stability}]
For $\vx\in[0,\infty)^N$, let $F_i\left(\vx\right)$ denote the right-hand side of \eqref{eq:fixedpoint}, with $\delta_j$ replaced by $x_j$. Matrix inverse monotonicity shows that $F$ is coordinatewise nondecreasing and
\begin{equation}
 0<F_i\left(\vx\right)\le c_i\left(1+x_i\right).
 \label{eq:Fbound}
\end{equation}
Iteration therefore increases within the interval $0\le x_i\le c_i/(1-c_i)$, and continuity gives a positive fixed point. For $t>1$,
\begin{equation*}
 \mM+\blkdiag\left(\frac{\mSigma^{(i)}}{1+tx_i}\right)
 \succ\frac1t\left(\mM+\blkdiag\left(\frac{\mSigma^{(i)}}{1+x_i}\right)\right),
\end{equation*}
so $F\left(t\vx\right)<tF\left(\vx\right)$ coordinatewise. If two positive fixed points $\vx,\vy$ differed, interchange them if necessary so that $t=\max_{1\le i\le N}x_i/y_i>1$. At a maximizing coordinate, monotonicity would give $x_i\le F_i\left(t\vy\right)<ty_i=x_i$, a contradiction.
To relate $\mJ$ to this fixed point, define $\mK,\mJ_\delta\in\R^{N\times N}$ by
\begin{equation}
 K_{ij}=\tr\left(\mSigma^{(i)}\mT_{ij}\mSigma^{(j)}\mT_{ji}\right),
 \qquad
 \left(\mJ_\delta\right)_{ij}
 =\frac{K_{ij}}{n_i\left(1+\delta_j\right)^2}.
 \label{eq:KJac}
\end{equation}
The matrix $\mJ_\delta$ is the Jacobian of $F$ at $\vdelta$. Also, $\mK$ is symmetric and entrywise nonnegative because
\begin{equation}
  K_{ij}=\left\|\left(\mSigma^{(i)}\right)^{1/2}\mT_{ij}\left(\mSigma^{(j)}\right)^{1/2}\right\|_F^2.
\end{equation}

For $\vomega=\left(1+\delta_1,\ldots,1+\delta_N\right)^\top$, the identity
\begin{equation*}
 \mT\blkdiag\left(\frac{\mSigma^{(i)}}{1+\delta_i}\right)\mT
 =\mT-\mT\mM\mT\preceq\mT
\end{equation*}
gives
\begin{equation}
 \left(\mJ_\delta\vomega\right)_i\le\delta_i
 \le c_i\omega_i.
 \label{eq:weightedcontraction}
\end{equation}
Thus $\rho\left(\mJ_\delta\right)\le c_{\max}:=\max_{1\le i\le N}c_i<1$. Since
\begin{equation}
 \mJ_\delta=\mD_n^{-1}\mJ\mD_n,
 \qquad \mD_n=\diag\left(n_1,\ldots,n_N\right),
 \label{eq:similarity}
\end{equation}
the same bound holds for $\mJ$, proving \eqref{eq:spectral_radius}. Its entries are nonnegative, so the convergent Neumann series gives \eqref{eq:stability}.
Under Assumption~\ref{ass:main}, the weights $\omega_i$ and ratios $n_i/n_j$ are bounded, and $c_{\max}$ stays below one. Thus \eqref{eq:weightedcontraction} uniformly bounds both $\left(\mI_N-\mJ_\delta\right)^{-1}$ and $\left(\mI_N-\mJ\right)^{-1}$. Together with $\left\|\mT\right\|_{2\to2}\le\left[s_1\left(1-c_{\max}\right)\right]^{-1}$, implicit differentiation bounds the first two derivatives under bounded smooth variations of the covariance contributions or of $\mM$.
\end{proof}

\subsection{Deterministic equivalents}
\label{app:bulk}
\label{app:response}

The following result is a block-wise version of \citep[Theorem~3.14]{louart2018concentration}.

\begin{lemma}[Deterministic equivalent of the inverse]
\label{lem:bulk}
The matrix $\mT$ in \eqref{eq:matrix_T}--\eqref{eq:fixedpoint} is a deterministic equivalent of
\begin{equation*}
 \mR=\left(\blkdiag\left(\mS^{(i)}\right)+\mM\right)^{-1}
 \in\R^{Np\times Np}
\end{equation*}
for bilinear forms with bounded deterministic vectors and traces normalized by $p$ with bounded deterministic test matrices: the corresponding scalar differences tend to zero in probability.
\end{lemma}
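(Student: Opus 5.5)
The plan is a Gaussian-resolvent argument in the style of \citet{louart2018concentration}, carried out block by block. The goal is to prove $\E[\mR]\approx\mT$ in the required weak senses, and then to add concentration of $\mR$ around its mean.

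\textbf{Step 1: reduction and concentration.} By Lemma~\ref{lem:decomposition}, write $\mS^{(i)}=\frac1{n_i}\sum_{r=1}^{n_i-1}\vg_r^{(i)}\vg_r^{(i)\top}$, where the $\vg_r^{(i)}\sim\mathcal N(\vzero,\mSigma^{(i)})$ are independent. The terms $\frac1{n_i}\vg\vg^\top$ with index $n_i-1$ instead of $n_i$ change $\delta_i$ only by $O(1/n_i)$, so I will ignore them.

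Work on the event $\mathcal B$ of Remark~\ref{rem:covariance-bounds}, intersected with an operator-norm bound on the data. On this event $\|\mR\|_{2\to2}\le h^{-1}$. Use the identity
\begin{equation*}
 \mR-\mR'=\mR(\mA'-\mA)\mR',
\end{equation*}
where $\mA$ and $\mA'$ denote the two matrices being inverted. It shows that $\vu^\top\mR\vv$ and $\frac1p\tr(\mA\mR)$ are Lipschitz functions of the Gaussian data, with constants $O(n^{-1/2})$ and $O(p^{-1})$ respectively. Gaussian concentration, with the standard truncation to the good event as in Theorem~\ref{thm:concentration}, then reduces the claim to $\|\E[\mR]-\mT\|_{2\to2}\to0$.

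\textbf{Step 2: leave-one-out identity.} Let $\mE_i$ denote the $i$th block embedding, and put $\tilde\vg_r=\mE_i\vg_r^{(i)}\in\R^{Np}$. Let $\mR_{-r}$ be the resolvent with that sample removed. Sherman--Morrison gives
\begin{equation*}
 \mR\tilde\vg_r=\frac{\mR_{-r}\tilde\vg_r}{1+\frac1{n_i}\tilde\vg_r^\top\mR_{-r}\tilde\vg_r},
\end{equation*}
and the quadratic form in the denominator concentrates around $\frac1{n_i}\tr(\mSigma^{(i)}\E[\mR]_{ii})=:\tilde\delta_i$. The key point is that only the $(i,i)$ block enters, because $\tilde\vg_r$ lives in block $i$; this is what makes the fixed point \eqref{eq:fixedpoint} node-resolved.

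Next use
\begin{equation*}
 \mR-\tilde\mT=\mR\left(\blkdiag\left(\tfrac{\mSigma^{(i)}}{1+\tilde\delta_i}\right)-\blkdiag(\mS^{(i)})\right)\tilde\mT,
\end{equation*}
where $\tilde\mT$ is \eqref{eq:matrix_T} evaluated at $\tilde\vdelta$. Take expectations, expand $\blkdiag(\mS^{(i)})$ into rank-one terms, and apply Sherman--Morrison to each. The result is $\|\E[\mR]-\tilde\mT\|_{2\to2}=O(n^{-1/2})$. The error terms are controlled by the variance of the quadratic form, which is $O(1/n)$ by Step 1, together with $\|\mR\|,\|\mR_{-r}\|,\|\tilde\mT\|=O(1)$. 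The matrix $\mM\succeq0$ is deterministic and never touched by the rank-one updates, so it causes no difficulty; this is the same reason the bounds are uniform in $\alpha$.

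\textbf{Step 3: stability of the fixed point.} From Step 2, $\tilde\vdelta$ satisfies \eqref{eq:fixedpoint} up to an $o(1)$ error, namely $\tilde\vdelta=F(\tilde\vdelta)+o(1)$ with $F$ as in the proof of Proposition~\ref{prop:stability}. I expect this to be the main obstacle, because an approximate fixed point must be shown to be close to the true one.

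To handle it, I will first try the Jacobian bound from Proposition~\ref{prop:stability}, $\rho(\mJ_\delta)\le c_{\max}<1$ with the weighted contraction \eqref{eq:weightedcontraction}, extended along the segment between $\tilde\vdelta$ and $\vdelta$. If that fails to be uniform, I will fall back on the scaling argument used there for uniqueness: $F(t\vx)<tF(\vx)$ for $t>1$, applied with quantitative slack, gives $\max_i|\tilde\delta_i/\delta_i-1|=o(1)$. Both routes use that $\delta_i$ is bounded below and above by Assumption~\ref{ass:main}.

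Finally, the resolvent identity transfers this to $\|\tilde\mT-\mT\|_{2\to2}\to0$, since
\begin{equation*}
 \tilde\mT-\mT=\tilde\mT\left(\blkdiag\left(\left(\tfrac1{1+\delta_i}-\tfrac1{1+\tilde\delta_i}\right)\mSigma^{(i)}\right)\right)\mT,
\end{equation*}
and both inverses are bounded by $[s_1(1-c_{\max})]^{-1}$. Combined with Step 1, this proves the statement for bilinear forms and normalized traces.
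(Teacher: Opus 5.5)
Your proposal is correct and takes essentially the paper's route. The paper gives no separate proof: it invokes a block-wise version of Theorem~3.14 of \citet{louart2018concentration}, and that adaptation is exactly your scheme of Lipschitz concentration, Sherman--Morrison leave-one-out with only the $(i,i)$ block feeding $\delta_i$, and stability of the fixed point. One point needs more care than in a ridge-regularized argument: $\|\mR\|_{2\to2}$ is bounded only with high probability, and $\mathcal B$ alone does not bound $\|\mR_{-r}\|_{2\to2}$, so the expectations in Step~2 should use leave-one-out good events or negative-moment bounds on the smallest Wishart eigenvalues, which is routine because $c_i$ stays below one.
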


Similarly, the following result is obtained. Let $\mE_i=\ve_i\otimes\mI_p\in\R^{Np\times p}$ insert a vector into block $i$, where $\ve_i\in\R^N$ is the $i$th coordinate vector. Set $\mP_\ell=\mE_\ell\mSigma^{(\ell)}\mE_\ell^\top\in\R^{Np\times Np}$, so that $\vw^\top\mP_\ell\vw=\vw^{(\ell)\top}\mSigma^{(\ell)}\vw^{(\ell)}$.

\begin{lemma}[Deterministic equivalent for the quadratic term]
\label{lem:quadratic}
For $\ell=1,\ldots,N$, define
\begin{align}
 \vu^{(\ell)}
 &=\left(\mI_N-\mJ_\delta\right)^{-1}
 \stack\left\{\frac{K_{i\ell}}{n_i}\right\}_{i=1}^N
 \in\R^N,
 \label{eq:quadratic-u}\\
 \mD_\ell
 &=\mT\left(\mP_\ell+\blkdiag\left(
 \frac{u_j^{(\ell)}\mSigma^{(j)}}{\left(1+\delta_j\right)^2}
 \right)\right)\mT\in\R^{Np\times Np}.
 \label{eq:quadratic-D}
\end{align}
Then $\mD_\ell$ is a deterministic equivalent of $\mR\mP_\ell\mR$. 
Moreover, $\tr\left(\mSigma^{(j)}\mD_{\ell,jj}\right)/n_j=u_j^{(\ell)}$, where $\mD_{\ell,jj}$ denotes its $j$th diagonal block.
\end{lemma}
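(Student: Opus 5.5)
The plan is to obtain $\mR\mP_\ell\mR$ as a derivative of the resolvent and then transfer the deterministic equivalent of Lemma~\ref{lem:bulk} through that derivative. Introduce a small deterministic perturbation $\mM_s:=\mM-s\mP_\ell$ for $|s|$ small. Since $\mP_\ell\succeq0$ is bounded, on the event $\mathcal B$ the operator $\mR_s=(\blkdiag(\mS^{(i)})+\mM_s)^{-1}$ stays uniformly bounded for $|s|<h/(2s_2)$, and
\[
\partial_s\mR_s\big|_{s=0}=\mR\mP_\ell\mR .
\]
By Lemma~\ref{lem:bulk}, which holds for any bounded deterministic coupling (Appendix~\ref{app:stability} allows general $\mM$), $\mR_s$ has the deterministic equivalent $\mT_s$ defined by \eqref{eq:matrix_T}--\eqref{eq:fixedpoint} with $\mM$ replaced by $\mM_s$. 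The candidate for $\mD_\ell$ is therefore $\partial_s\mT_s|_{s=0}$.

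Next, compute this derivative by implicit differentiation. Write $\mT_s=(\mM_s+\blkdiag(\mSigma^{(i)}/(1+\delta_i(s))))^{-1}$. Differentiating gives
\[
\partial_s\mT_s=\mT\Big(\mP_\ell+\blkdiag\Big(\tfrac{\dot\delta_j\,\mSigma^{(j)}}{(1+\delta_j)^2}\Big)\Big)\mT .
\]
Differentiating $\delta_i=\tr(\mSigma^{(i)}\mT_{ii})/n_i$ then yields
\[
\dot\delta_i=\frac{1}{n_i}\tr\big(\mSigma^{(i)}\mT_{i\ell}\mSigma^{(\ell)}\mT_{\ell i}\big)+\sum_j \frac{K_{ij}}{n_i(1+\delta_j)^2}\dot\delta_j ,
\]
that is, $(\mI_N-\mJ_\delta)\dot{\vdelta}=\stack\{K_{i\ell}/n_i\}$. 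Proposition~\ref{prop:stability} shows that $\mI_N-\mJ_\delta$ is invertible with a uniformly bounded inverse, so $\dot{\vdelta}=\vu^{(\ell)}$ and $\partial_s\mT_s|_0=\mD_\ell$. The trace identity $\tr(\mSigma^{(j)}\mD_{\ell,jj})/n_j=u^{(\ell)}_j$ is exactly the differentiated fixed-point equation $\dot\delta_j=\tr(\mSigma^{(j)}\partial_s\mT_{s,jj})/n_j$, so it comes for free.

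The main obstacle is exchanging the limit with differentiation. Pointwise convergence in probability of $\va^\top(\mR_s-\mT_s)\vb$ does not by itself give convergence of the derivatives. I would use analyticity. For bounded deterministic $\va,\vb$, the functions $g_p(s)=\va^\top(\mR_s-\mT_s)\vb$ (and similarly the normalized traces) extend holomorphically to a complex disc $|s|<r$ on $\mathcal B$. For $\mR_s$ this holds because the perturbation is small compared with $h$. For $\mT_s$ it holds because the fixed point depends analytically on $s$ by the implicit function theorem, with the uniform invertibility of $\mI_N-\mJ_\delta$ from Proposition~\ref{prop:stability} and the derivative bounds noted at the end of that proof. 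These functions are uniformly bounded there. Convergence to zero in probability on a countable dense set of real $s$, combined with Vitali's theorem (or directly Cauchy's integral formula on a smaller circle after extending Lemma~\ref{lem:bulk} to complex $s$ near the real axis), then gives $g_p'(0)\to0$ in probability. Alternatively, a finite-difference argument works: use the uniform bound on second derivatives of both $\mR_s$ (on $\mathcal B$) and $\mT_s$, compare $(g_p(s)-g_p(0))/s$ with $g_p'(0)$ up to an $O(s)$ error, and send $p\to\infty$ first and then $s\to0$. I would present the finite-difference version, since it only needs the real-parameter statement of Lemma~\ref{lem:bulk} and the $C^2$ bounds already established.
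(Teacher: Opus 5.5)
Your proposal is correct and follows the paper's intended route: $\mD_\ell$ is the derivative of the deterministic equivalent $\mT$ along a perturbation of $\mM$ in the direction $\mP_\ell$, with $\dot{\vdelta}=\vu^{(\ell)}$ obtained by implicit differentiation of the fixed point, and the uniform first- and second-derivative bounds from Proposition~\ref{prop:stability} justifying the exchange of limit and derivative. The one adjustment is to perturb only on the side where the coupling stays positive semidefinite, $\mM+t\mP_\ell$ with $t\ge0$ (that is, $s\le0$ in your notation), and to use a one-sided difference quotient, because Lemma~\ref{lem:bulk} and the bounds of Proposition~\ref{prop:stability} are established only for $\mM\succeq0$ (for $s>0$ and, e.g., $\alpha=0$, your $\mM_s=-s\mP_\ell$ is not positive semidefinite).
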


\section{Test Error: Heterogeneous and Homogeneous Cases}
\label{app:testerror}

We first derive the general test-error formula and the identities describing mismatch between nodes. The homogeneous corollary then follows directly. 

\subsection{Heterogeneous case}
\label{app:risk}
\label{app:heterogeneous}
\label{app:node}

\begin{proof}[Proof of Theorem~\ref{thm:risk}]
Using the Gaussian decomposition in \eqref{eq:samplemean}, define
\begin{equation}
 \mU=\left[\mE_1\vmu^{(1)},\ldots,\mE_N\vmu^{(N)}\right],
 \qquad
 \mV=\left[\mE_1\veta^{(1)},\ldots,\mE_N\veta^{(N)}\right],
 \qquad \mF=\mU+\mV\in\R^{Np\times N}.
 \label{eq:UV}
\end{equation}
Then $\mB=\blkdiag\left(\mS^{(i)}\right)+\mF\mF^\top$ and $\vb=\mF\vone$. The Woodbury identity gives the exact representation
\begin{equation}
 \whw=\mR\mF\left(\mI_N+\mF^\top\mR\mF\right)^{-1}\vone.
 \label{eq:woodbury}
\end{equation}
Condition on the centered covariances. The sample-mean fluctuations remain independent Gaussian vectors. Gaussian second moments show that $\mU^\top\mR\mV$ vanishes and $\mV^\top\mR\mV$ differs negligibly from $\diag\left(\tr\left(\mSigma^{(i)}\mR_{ii}\right)/n_i\right)$: each centered entry has conditional variance at most $C/p$ on $\mathcal B$. Since $N$ is fixed, Lemma~\ref{lem:bulk} gives convergence in operator norm,
\begin{equation}
 \mF^\top\mR\mF-\mGamma-\diag\left(\vdelta\right)
 \xrightarrow{\P}0,
 \qquad \mGamma=\mU^\top\mT\mU\in\R^{N\times N}.
 \label{eq:smallGram}
\end{equation}
Consequently, for the deterministic vector
\begin{equation}
 \vq=\left(\diag\left(1+\delta_1,\ldots,1+\delta_N\right)
 +\mGamma\right)^{-1}\vone\in\R^N,
 \label{eq:q}
\end{equation}
we have
\begin{equation}
 \left\|\whw-\mR\mF\vq\right\|_2\xrightarrow{\P}0.
 \label{eq:freeze-q}
\end{equation}
Indeed, both matrices inverted in \eqref{eq:woodbury} and \eqref{eq:q} are bounded below by $\mI_N$, and $\left\|\mF\right\|_{2\to2}$ is bounded in probability because $\E\left\|\veta^{(i)}\right\|_2^2=\tr\left(\mSigma^{(i)}\right)/n_i$ is bounded. The random part $\mV$ must still be retained: it vanishes in bounded linear forms but contributes to quadratic forms.
For any bounded deterministic $\va\in\R^{Np}$, the conditional variance of $\va^\top\mR\mV\vq$ is at most $C/n_{\min}$ on $\mathcal B$. Lemma~\ref{lem:bulk} and \eqref{eq:freeze-q} therefore give $\va^\top\whw-\va^\top\mT\mU\vq\xrightarrow{\P}0$. Set $\vm=\mT\mU\vq$. Equation~\eqref{eq:q} implies
\begin{equation}
 q_i=\frac{1-\vmu^{(i)\top}\vm^{(i)}}{1+\delta_i}.
 \label{eq:qidentity}
\end{equation}
Substituting this into $\mT^{-1}\vm=\mU\vq$ identifies $\vm$ with \eqref{eq:center}. Taking $\va=\mE_i\vmu^{(i)}$ proves \eqref{eq:signalDE}.
For the quadratic term, use the same conditional Gaussian calculation with $\mA=\mR\mP_\ell\mR$. Equation~\eqref{eq:freeze-q} also holds in bounded quadratic forms, since the vectors involved are bounded in probability. Lemma~\ref{lem:quadratic} gives
\begin{equation*}
 \whw^\top\mP_\ell\whw-
 \left(\vq^\top\mU^\top\mD_\ell\mU\vq
 +\sum_{j=1}^N q_j^2u_j^{(\ell)}\right)\xrightarrow{\P}0.
\end{equation*}
By \eqref{eq:quadratic-D}, \eqref{eq:qidentity}, and $\vm=\mT\mU\vq$, the expression in parentheses is
\begin{equation}
 \vm^{(\ell)\top}\mSigma^{(\ell)}\vm^{(\ell)}
 +\sum_{j=1}^N\frac{u_j^{(\ell)}}{\left(1+\delta_j\right)^2}
 \left[\left(1-\vmu^{(j)\top}\vm^{(j)}\right)^2
 +\vm^{(j)\top}\mSigma^{(j)}\vm^{(j)}\right].
 \label{eq:quadcalculation}
\end{equation}
The bracket is precisely $r_j$ in \eqref{eq:generalvar}. To identify its coefficient, let $\mW\in\R^{N\times N}$ have entries $W_{\ell j}=u_j^{(\ell)}/\left(1+\delta_j\right)^2$. Equation~\eqref{eq:quadratic-u} and symmetry of $\mK$ give $\mW=\mJ+\mW\mJ$, and hence
\begin{equation}
 \mW=\mJ\left(\mI_N-\mJ\right)^{-1}
 =\left(\mI_N-\mJ\right)^{-1}\mJ,
 \qquad \mW\vr=\vv.
 \label{eq:varianceidentity}
\end{equation}
This proves \eqref{eq:noiseDE}, also for unequal sample sizes.
Finally, the error denominator stays positive. Write $\gamma_i=\vmu^{(i)\top}\left(\mSigma^{(i)}\right)^{-1}\vmu^{(i)}$ and let $K_\mu$ bound the mean norms. Completing the square using \eqref{eq:population} gives
\begin{equation}
 r_i=\frac1{1+\gamma_i}
 +\left(\vm^{(i)}-\vw_*^{(i)}\right)^\top
 \mC^{(i)}\left(\vm^{(i)}-\vw_*^{(i)}\right)
 \ge\frac1{1+K_\mu^2/s_1}.
 \label{eq:rpositive}
\end{equation}
Restricting the variational formula for $\vx^\top\mT_{ii}\vx$ to vectors constant across nodes, which are annihilated by $\mM$, gives
\begin{equation*}
 \mT_{ii}\succeq
 \left(\sum_{j=1}^N\frac{\mSigma^{(j)}}{1+\delta_j}\right)^{-1}
 \succeq\frac1{Ns_2}\mI_p.
\end{equation*}
It follows that
\begin{equation}
 J_{ii}\ge\frac{c_i\left(1-c_i\right)^2s_1^2}{N^2s_2^2},
 \qquad v_i\ge J_{ii}r_i>0.
 \label{eq:positivevariance}
\end{equation}
Under Assumption~\ref{ass:main}, these lower bounds stay positive, independently of $\alpha$. The deterministic signal and quadratic terms are bounded, so continuity of $\left(a,b\right)\mapsto\Phi\left(-a/\sqrt b\right)$ for $b$ bounded away from zero turns \eqref{eq:signalDE}--\eqref{eq:noiseDE} into \eqref{eq:generalrisk}. The errors are bounded, so convergence in probability also gives convergence of their absolute difference in mean.
\end{proof}

\begin{proof}[Proof of \eqref{eq:residualrisk}--\eqref{eq:biasidentity}]
The residual identity is \eqref{eq:rpositive}. Applying $\left(\mI_N-\mJ\right)^{-1}\mJ$ gives the variance decomposition \eqref{eq:varianceparts}--\eqref{eq:v_het}, with both terms in $\R^N$; entrywise nonnegativity gives $\vv_{\mathrm{het}}\ge0$. To prove \eqref{eq:biasidentity}, subtract $\mC^{(i)}\vw_*^{(i)}=\vmu^{(i)}$ from \eqref{eq:center} to obtain
\begin{equation*}
 \left(\mM+\blkdiag\left(\frac{\mC^{(i)}}{1+\delta_i}\right)\right)
 \left(\vm-\vw_*\right)=-\mM\vw_*.
\end{equation*}
Left-multiplying the preceding equation by $\mQ$, the inverse
matrix defined in \eqref{eq:matrix_Q}, gives
$\vm-\vw_*=-\mQ\mM\vw_*$. Substituting
$\mM=2\alpha\left(\mL^{(\mathcal G)}\otimes\mI_p\right)$
yields \eqref{eq:biasidentity}.
For $\alpha>0$, $\vm=\vw_*$ is therefore equivalent to $\left(\mL^{(\mathcal G)}\otimes\mI_p\right)\vw_*=\vzero$. 
Now, \eqref{eq:gtv_quadratic_identity} with $\vw = \vw_*$
shows that this occurs exactly when the local coefficients agree within every connected component. Otherwise some $d_i>0$, and $\left(\vv_{\mathrm{het}}\right)_i\ge J_{ii}d_i>0$ by \eqref{eq:positivevariance}.
\end{proof}

\subsection{Homogeneous case}
\label{app:homogeneous}

\begin{proof}[Proof of Corollary~\ref{cor:homogeneous}]
Suppose $\vmu^{(i)}=\vmu$, $\mSigma^{(i)}=\mSigma$, and $n_i=n$. The vector $\vone\otimes\vw_*\in\R^{Np}$, with $\vw_*=\mSigma^{-1}\vmu/(1+\gamma)$, solves \eqref{eq:center}, since the Laplacian annihilates constant node vectors and $\left(\mSigma+\vmu\vmu^\top\right)\vw_*=\vmu$. Uniqueness gives $\vm^{(i)}=\vw_*^{(i)}$ and
\begin{equation*}
 \vmu^\top\vm^{(i)}=\frac{\gamma}{1+\gamma},
 \qquad
 \vm^{(i)\top}\mSigma\vm^{(i)}=\frac{\gamma}{\left(1+\gamma\right)^2},
 \qquad r_i=\frac1{1+\gamma}.
\end{equation*}
Substitution into \eqref{eq:generalvar} and \eqref{eq:generalrisk} proves \eqref{eq:homvar} and \eqref{eq:homerror}, with the convergence asserted in Theorem~\ref{thm:risk}.
\end{proof}

\section{Observable Concentration}
\label{app:concentration}

This separate result additionally uses the upper data bound in \eqref{eq:localgap} and allows growing $N$ under the uniform assumptions of Theorem~\ref{thm:concentration}. It is not needed for the test-error derivation.

\begin{proof}[Proof of Theorem~\ref{thm:concentration}]
The Gaussian singular-value bounds used in Remark~\ref{rem:covariance-bounds} give
\begin{equation*}
 \P\left\{\lambda_{\min}\left(\mS^{(i)}\right)<h
 \ \text{or}\ \left\|\mZ^{(i)}\right\|_{2\to2}>K\sqrt{n_i}\right\}
 \le C e^{-a n_i}.
\end{equation*}
A union bound proves \eqref{eq:goodprob} for the event $\mathcal A$ defined in the main text. On this event, choose a node $i$ attaining $r=\max_{1\le j\le N}\left\|\whw^{(j)}\right\|_2>0$ and set $\vu=\whw^{(i)}/r$. Taking the inner product of its normal equation with $\vu$ gives
\begin{equation*}
 r\vu^\top\mB^{(i)}\vu
 +2\alpha\sum_{j=1}^N A_{ij}\left(r-\vu^\top\whw^{(j)}\right)
 =\vu^\top\vb^{(i)}\le K.
\end{equation*}
Every graph term is nonnegative. Since $\mB^{(i)}\succeq h\mI_p$, this proves
\begin{equation}
 \max_{1\le i\le N}\left\|\whw^{(i)}\right\|_2\le K/h,
 \label{eq:blockbound}
\end{equation}
also when $r=0$, with no dependence on the graph or $N$.
Equip the full signed dataset $\mZ=\left(\mZ^{(i)}\right)_{i=1}^N$ with the product Frobenius norm, whose square is $\sum_{i=1}^N\left\|\mZ^{(i)}\right\|_F^2$. Consider two realizations $\mZ_1=\left(\mZ_1^{(i)}\right)_{i=1}^N$ and $\mZ_2=\left(\mZ_2^{(i)}\right)_{i=1}^N$ in $\mathcal A$, and denote their fitted coefficients and sample quantities by corresponding subscripts. Subtracting their normal equations gives
\begin{equation*}
 \mH_1\left(\whw_1-\whw_2\right)
 =\vb_1-\vb_2-\left(\mB_1-\mB_2\right)\whw_2.
\end{equation*}
The data bound and \eqref{eq:blockbound} imply
\begin{align*}
 \left\|\vb_1^{(i)}-\vb_2^{(i)}\right\|_2
 &\le\frac{\left\|\mZ_1^{(i)}-\mZ_2^{(i)}\right\|_F}{\sqrt{n_i}},\\
 \left\|\left(\mB_1^{(i)}-\mB_2^{(i)}\right)\whw_2^{(i)}\right\|_2
 &\le\frac{2K^2}{h\sqrt{n_i}}
 \left\|\mZ_1^{(i)}-\mZ_2^{(i)}\right\|_F.
\end{align*}
Using $\left\|\mH_1^{-1}\right\|_{2\to2}\le h^{-1}$ and summing the squared block bounds yields
\begin{equation}
 \left\|\whw_1-\whw_2\right\|_2
 \le\ell\left\|\mZ_1-\mZ_2\right\|_F,
 \qquad \ell=\frac{1+2K^2/h}{h\sqrt{n_{\min}}}.
 \label{eq:solutionlip}
\end{equation}
Thus, for a $1$-Lipschitz $f$, the function $f\left(\whw\left(\mZ\right)\right)$ is $\ell$-Lipschitz on $\mathcal A$. Extend it to a real-valued function $\widetilde f$ on the full data space with the same Lipschitz constant; such an extension always exists. The signed data are jointly Gaussian with covariance operator norm at most $s_2$, so Gaussian concentration gives
\begin{equation}
    \P\left\{\left|\widetilde f-\E\widetilde f\right|>t\right\}\le2\exp\left(-t^2/\left(2s_2\ell^2\right)\right).
\end{equation}
Since $\P\left(\mathcal A\right)\ge1/2$, conditioning costs at most a factor two. Integrating the tail bounds the change of mean by $C_0\ell$; absorbing this shift into the tail constants gives the same Gaussian bound about the conditional mean. Since $\widetilde f=f\left(\whw\right)$ on $\mathcal A$, substitution of \eqref{eq:solutionlip} proves \eqref{eq:concentration}, with constants independent of $\alpha$, the graph, and $N$.
\end{proof}

\section{Real-World Experiments}
\label{app:real}

This section gives the details of the experimental section Section~\ref{sec:real} of the main paper. The code, including the download of the five datasets, is provided as supplementary material.

\subsection{Protocol}

At each node, the data are split at random into a pool (70\%) and a held-out test set (30\%). Features are centred and standardized with the mean and standard deviation of the union of all pools. We then form the signed observations $\vz=y\vx$ and compute their mean $\vmu^{(i)}$ and covariance $\mSigma^{(i)}$ on the pool of each node. These moments, the sample sizes $n_i$ and the Laplacian $\mL^{(\mathcal G)}$ give the prediction of Theorem~\ref{thm:risk}.

Each training set consists of $n_i$ samples drawn with replacement from the pool of node $i$. The pool is then exactly the population of the theory. For every draw, we compute two errors: the Gaussian formula~\eqref{eq:testerror} with the pool moments, which is the quantity predicted by the theory, and the $0/1$ error on the held-out test set. Each setting uses 100 independent draws, shared across the grid of values of $\alpha$.

\subsection{Datasets, partitions and graphs}

Table~\ref{tab:real_settings} lists the 55 settings. MNIST and Fashion-MNIST use the PCA of the pixels. For rotations, the PCA is fitted on the rotated images of all nodes. CIFAR-10 uses a single layer of random convolutional features \citep{coates2011analysis}: 64 whitened random patches of size $6\times6\times3$, stride 2, a shifted ReLU and sum pooling over the four quadrants, followed by PCA. No pretrained network is used. For Adult, numeric features (log-transformed when skewed) and one-hot categorical features are reduced by PCA. The attribute that defines the silos is removed, and the classes are balanced within each silo.

We build the nodes in four ways. In random splits, the samples of one task are distributed at random, so all nodes have the same distribution. In rotation settings, node $k$ receives images rotated by $30(k-1)$ degrees, $k=1,\ldots,6$. The task is the same at every node, but the features differ. In Dirichlet settings, each class of the binary task is a group of original classes, for example vehicles vs.\ animals in CIFAR-10. Each node draws its proportions of the original classes within each group from a symmetric Dirichlet distribution with parameter $a$. For $a=100$, all nodes have nearly the same proportions. For $a=0.1$, most samples of a node come from one or two original classes. The nodes receive disjoint samples. In alignment settings, nodes 1--3 solve one task and nodes 4--6 another: upright and $90^\circ$-rotated digits for MNIST, and two different pairs of vehicle and animal classes for CIFAR-10.

Besides paths, cycles and complete graphs, we use $k$-nearest-neighbour graphs built from simple statistics of each node. For node $i$, we form the vector containing the mean of $\vz$ and the mean of its squared coordinates over the pool. Two nodes are connected by an edge of unit weight if one of them is among the $k$ nodes closest to the other in Euclidean distance between these vectors.

\begin{table}[h]
\centering\small
\caption{The 55 real-data settings. Unless stated otherwise, $N=6$, $p=100$ and $n_i=250$. Tasks are written as positive vs.\ negative class. Dirichlet partitions use $a\in\{0.1,0.5,100\}$. Complete graphs have total edge weight 6 for random splits, 5 for rotations and $N$ otherwise; all other graphs have unit weights. \#: number of settings.}
\label{tab:real_settings}
\setlength{\tabcolsep}{5pt}
\begin{tabular}{@{}l>{\raggedright\arraybackslash}p{3.6cm}>{\raggedright\arraybackslash}p{5.4cm}>{\raggedright\arraybackslash}p{3.1cm}r@{}}
\toprule
Dataset & Partition & Tasks & Graphs & \# \\
\midrule
MNIST & random split & 3 vs.\ 5 & path, cycle, complete & 3 \\
 & rotations ($0^\circ$--$150^\circ$) & 3/5, 4/9, 7/9, 2/7, 1/7 & path; complete for 3/5 & 6 \\
 & Dirichlet & digits $<5$ vs.\ $\ge5$ & complete, 2-NN & 6 \\
 & alignment ($0^\circ$ and $90^\circ$) & 3 vs.\ 5 & two cycles & 2 \\
\addlinespace
F-MNIST & random split & pullover vs.\ coat & path, cycle, complete & 3 \\
 & rotations ($0^\circ$--$150^\circ$) & pullover/coat, T-shirt/shirt, sneaker/boot, sandal/sneaker, coat/shirt & path; complete for pullover/coat & 6 \\
 & Dirichlet & T-shirt, pullover, coat, shirt vs.\ other items & complete, 2-NN & 6 \\
\addlinespace
CIFAR-10 & random split & automobile vs.\ truck & path, cycle, complete & 3 \\
 & Dirichlet & vehicles vs.\ animals & complete, 2-NN & 6 \\
 & alignment & airplane, ship vs.\ bird, frog; automobile, truck vs.\ cat, dog & two cycles & 2 \\
\addlinespace
HAR & 10 groups of 3 users ($N=10$, $p=30$, $n_i=75$) & sitting vs.\ standing; upstairs vs.\ downstairs & complete, 3-NN & 4 \\
 & 30 single users ($N=30$, $p=12$, $n_i=30$) & same & complete, 3-NN & 4 \\
\addlinespace
Adult & 6 employment sectors ($p=40$, $n_i$ from 140 to 400) & income above \$50{,}000 & complete, 2-NN & 2 \\
 & 5 age groups ($N=5$, $p=40$, $n_i$ from 284 to 400) & same & complete, 2-NN & 2 \\
\midrule
 & & & & 55 \\
\bottomrule
\end{tabular}
\end{table}

\subsection{Choice of $\alpha$ and baselines}

Table~\ref{tab:tuning} uses 20 training draws per setting. All rules choose $\alpha$ on the grid above.
\begin{itemize}
 \item \emph{CV}: 5-fold cross-validation of the node-averaged $0/1$ error on the training data.
 \item \emph{Plug-in theory}: minimizes the prediction of Theorem~\ref{thm:risk}, computed with the training-sample mean and the Ledoit--Wolf covariance estimate of each node.
 \item \emph{Oracle theory}: the same, with the pool moments.
 \item \emph{Consensus}: GTVMin with $\alpha=10^4$.
 \item \emph{Fine-tuning}: $\vw^{(i)}=\argmin_{\vw}L^{(i)}(\vw)+\lambda\left\|\vw-\vw_{\mathrm{con}}\right\|_2^2$, where $\vw_{\mathrm{con}}$ is the consensus model and $\lambda$ is chosen by 5-fold cross-validation.
 \item \emph{Mixture}: GTVMin on the complete graph, with $\alpha$ chosen by 5-fold cross-validation.
 \item \emph{Best (hindsight)}: for each draw, the lowest test error over the grid.
\end{itemize}
Table~\ref{tab:tuning_full} reports the same results with their confidence intervals.

\begin{table}[h]
\centering\small
\caption{Test error (\%) of Table~\ref{tab:tuning} with 95\% confidence intervals over 20 training draws.}
\label{tab:tuning_full}
\begin{tabular}{@{}lcccccc@{}}
\toprule
Method & \makecell{MNIST\\rot.} & \makecell{FMN.\\rot.} & \makecell{CIFAR\\subp.} & \makecell{CIFAR\\Dir.} & \makecell{HAR\\users} & \makecell{Adult\\silos} \\
\midrule
Local & 10.97$\pm$0.23 & 22.32$\pm$0.42 & 17.35$\pm$0.31 & 15.41$\pm$0.36 & 20.77$\pm$0.72 & 25.77$\pm$0.34 \\
Consensus & 18.43$\pm$0.24 & 21.88$\pm$0.24 & 12.19$\pm$0.13 & 11.43$\pm$0.17 & 14.85$\pm$0.22 & 23.15$\pm$0.18 \\
Fine-tuning & 9.44$\pm$0.20 & 17.58$\pm$0.27 & 11.88$\pm$0.20 & 10.65$\pm$0.20 & 14.49$\pm$0.34 & 23.44$\pm$0.26 \\
Mixture & 9.08$\pm$0.18 & 17.35$\pm$0.22 & 11.84$\pm$0.20 & 10.58$\pm$0.21 & 14.74$\pm$0.37 & 23.56$\pm$0.22 \\
\midrule
\multicolumn{7}{@{}l}{\emph{GTVMin, $\alpha$ chosen by}} \\
\quad CV (5-fold) & 8.47$\pm$0.21 & 17.90$\pm$0.29 & 11.64$\pm$0.14 & 10.52$\pm$0.19 & 14.37$\pm$0.27 & 23.51$\pm$0.30 \\
\quad Plug-in theory & 8.37$\pm$0.18 & 17.86$\pm$0.24 & 11.81$\pm$0.19 & 10.44$\pm$0.18 & 14.39$\pm$0.29 & 23.78$\pm$0.28 \\
\quad Oracle theory & 8.37$\pm$0.18 & 17.83$\pm$0.23 & 11.53$\pm$0.17 & 10.37$\pm$0.19 & 14.22$\pm$0.26 & 23.29$\pm$0.20 \\
\midrule
\quad Best (hindsight) & 8.29$\pm$0.17 & 17.73$\pm$0.22 & 11.45$\pm$0.17 & 10.25$\pm$0.18 & 13.91$\pm$0.23 & 22.95$\pm$0.17 \\
\bottomrule
\end{tabular}
\end{table}

\subsection{Where the gap comes from}

We take the i.i.d.\ split of Fashion-MNIST (pullover vs.\ coat) on the path graph. At $\alpha=0$, the Gaussian formula gives an error of $23.2\%$ on the real data, against a prediction of $20.7\%$. We then replace the data of each node by Gaussian samples with the same mean and covariance. The error becomes $20.4\%$, against a prediction of $20.5\%$, and at $\alpha=0.05$ it is $16.50\%$ against $16.54\%$. The gap therefore comes from the shape of the real data and not from the finite dimension.

The quantity that differs is the spread of the whitened squared norm $\|\mSigma^{-1/2}(\vz-\vmu)\|_2^2/p$: its standard deviation is $0.80$ on the real data and $0.14$ on the Gaussian samples. Normalizing each sample to a fixed $\ell_2$ norm does not reduce the gap, because it does not change this spread in whitened coordinates. Replacing the noise term $r_i\mSigma^{(i)}$ in the derivation by its heteroscedastic version $\E[(1-\vz^\top\vm^{(i)})^2(\vz-\vmu^{(i)})(\vz-\vmu^{(i)})^\top]$ reduces the gap only slightly. The gap is largest at $\alpha=0$, where each node has the largest effective aspect ratio, and it is small near consensus.

\subsection{Additional figures}

\begin{figure}[h]\centering
\includegraphics[width=\textwidth]{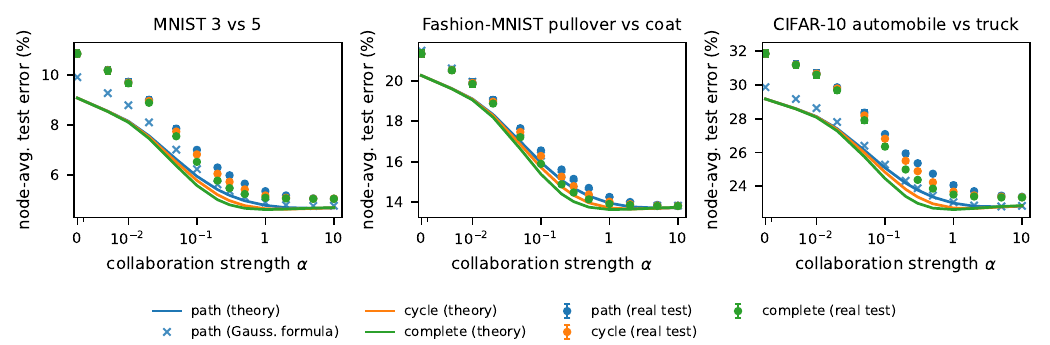}
\caption{Random splits into $N=6$ nodes with $p=100$ and $n_i=250$, on a path, a cycle and a complete graph with total edge weight 6. Lines: Theorem~\ref{thm:risk}. Dots: held-out test error with 95\% intervals over 100 draws. Crosses: Gaussian formula~\eqref{eq:testerror} for the path. As in Corollary~\ref{cor:homogeneous}, the error decreases towards that of the pooled model.}
\label{fig:real_homogeneous}
\end{figure}

\begin{figure}[h]\centering
\includegraphics[width=0.75\textwidth]{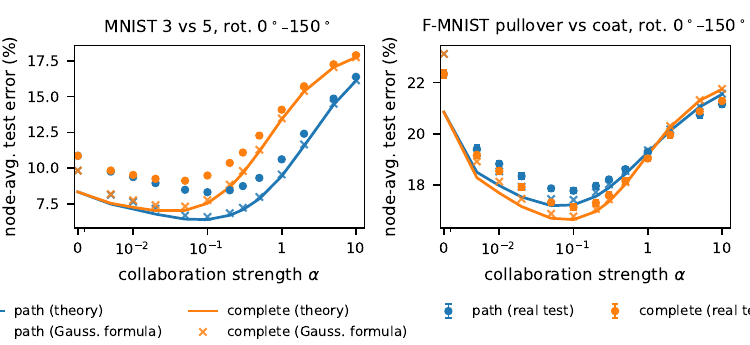}
\caption{Rotation shift: node $k$ sees the images rotated by $30(k-1)$ degrees, $N=6$, $p=100$, $n_i=250$, on a unit-weight path and a complete graph with total weight 5. Lines, dots and crosses as in Figure~\ref{fig:real_homogeneous}. The best $\alpha$ is intermediate, and on MNIST full collaboration is worse than local training.}
\label{fig:real_rotation}
\end{figure}

\begin{figure}[h]\centering
\includegraphics[width=0.75\textwidth]{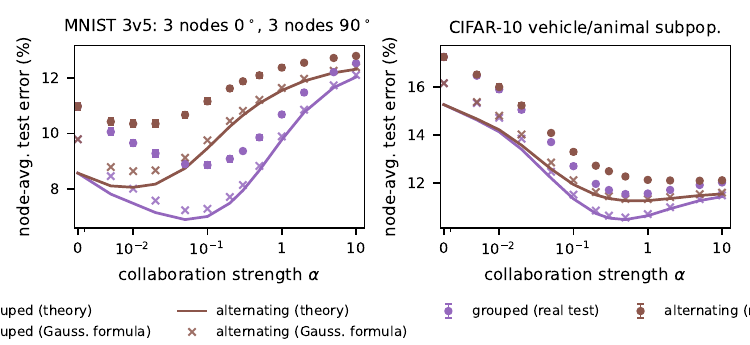}
\caption{Graph--task alignment on real data, with the two cycles of Figure~\ref{fig:alignment}. Left: MNIST 3 vs.\ 5, with nodes 1--3 upright and nodes 4--6 rotated by $90^\circ$. Right: CIFAR-10, nodes 1--3 separate airplanes and ships from birds and frogs, nodes 4--6 automobiles and trucks from cats and dogs. Both cycles have the same Laplacian spectrum and use the same training draws. The grouped cycle, with two edges between different tasks instead of six, has the lower error, as predicted.}
\label{fig:real_alignment}
\end{figure}

\begin{figure}[h]\centering
\includegraphics[width=\textwidth]{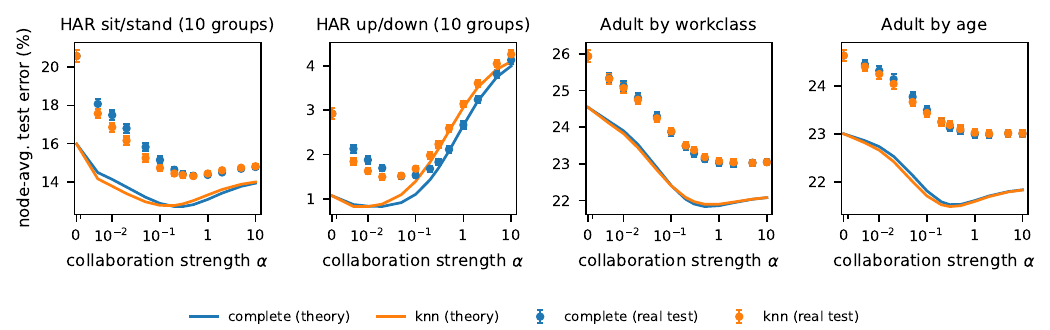}
\caption{Natural partitions. HAR: 10 nodes of 3 users each, $p=30$, $n_i=75$. Adult: 6 employment sectors or 5 age groups, $p=40$, unequal $n_i$. Complete graphs with total weight $N$, and $k$-NN graphs built from pool statistics ($k=3$ for HAR, $k=2$ for Adult). Lines and dots as in Figure~\ref{fig:real_homogeneous}.}
\label{fig:real_natural}
\end{figure}

\begin{figure}[h]\centering
\includegraphics[width=0.75\textwidth]{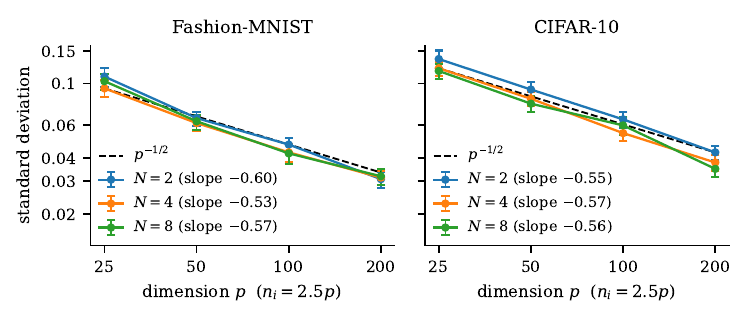}
\caption{Theorem~\ref{thm:concentration} on real data: standard deviation over 200 training draws of the unit-norm observable $N^{-1/2}\sum_i\ve_1^\top\whw^{(i)}$, for random splits of Fashion-MNIST (pullover vs.\ coat) and CIFAR-10 (automobile vs.\ truck). Unweighted cycle, $\alpha=0.3$, $n_i=2.5p$, bootstrap 95\% intervals. The fitted slopes, given in the legend, are close to $-1/2$ for every $N$.}
\label{fig:real_concentration}
\end{figure}

\subsection{All settings}

Table~\ref{tab:real_all} gives the results of every setting. The columns are:
\begin{itemize}
 \item $|\Delta|_{\mathrm G}$ and $|\Delta|_{\mathrm T}$: the largest difference over the grid between the prediction and, respectively, the Gaussian formula~\eqref{eq:testerror} and the held-out test error, in pp;
 \item $\alpha^{\mathrm{de}}$ and $\alpha^\star_{\mathrm{test}}$: the value of $\alpha$ selected by the theory, and the one with the lowest mean test error;
 \item regret: the extra test error at $\alpha^{\mathrm{de}}$, in pp;
 \item local, cons.\ and best: the test errors (\%) at $\alpha=0$, at $\alpha=10$, and at $\alpha^\star_{\mathrm{test}}$;
 \item NC: the non-Gaussianity index, which equals 1 for Gaussian data.
\end{itemize}

{\footnotesize\setlength{\tabcolsep}{3pt}
\input{latex/tab_full.tex}}
\captionof{table}{Results for all 55 real-data settings.}\label{tab:real_all}

\end{document}

%% file: math_commands.tex
\def\R{\mathbb{R}}
\def\E{\mathbb{E}}
\renewcommand{\P}{\mathbb{P}}

\renewcommand{\epsilon}{\varepsilon}
\def\va{\mathbf{a}}
\def\vb{\mathbf{b}}
\def\vc{\mathbf{c}}
\def\vd{\mathbf{d}}
\def\ve{\mathbf{e}}

\def\vm{\mathbf{m}}

\def\vq{\mathbf{q}}
\def\vr{\mathbf{r}}

\def\vu{\mathbf{u}}
\def\vv{\mathbf{v}}
\def\vw{\mathbf{w}}
\def\vx{\mathbf{x}}
\def\vy{\mathbf{y}}
\def\vz{\mathbf{z}}
\def\mA{\mathbf{A}}
\def\mB{\mathbf{B}}
\def\mC{\mathbf{C}}
\def\mD{\mathbf{D}}
\def\mE{\mathbf{E}}
\def\mF{\mathbf{F}}
\def\mG{\mathbf{G}}
\def\mH{\mathbf{H}}
\def\mI{\mathbf{I}}
\def\mJ{\mathbf{J}}
\def\mK{\mathbf{K}}
\def\mL{\mathbf{L}}
\def\mM{\mathbf{M}}

\def\mP{\mathbf{P}}
\def\mQ{\mathbf{Q}}
\def\mR{\mathbf{R}}
\def\mS{\mathbf{S}}
\def\mT{\mathbf{T}}
\def\mU{\mathbf{U}}
\def\mV{\mathbf{V}}
\def\mW{\mathbf{W}}
\def\mX{\mathbf{X}}

\def\mZ{\mathbf{Z}}
\def\vmu{\bm{\mu}}
\def\veta{\bm{\eta}}

\def\vdelta{\bm{\delta}}

\def\vomega{\bm{\omega}}

\def\mSigma{\mathbf{\Sigma}}
\def\mGamma{\mathbf{\Gamma}}

\DeclareMathOperator{\tr}{tr}

\DeclareMathOperator{\sign}{sign}
\DeclareMathOperator{\stack}{stack}
\DeclareMathOperator{\blkdiag}{blockdiag}
\DeclareMathOperator{\diag}{diag}

\DeclareMathOperator*{\argmin}{arg\,min}

\newcommand{\ie}{\emph{i.e.},\ }
\newcommand{\vone}{\bm{1}}
\newcommand{\vzero}{\bm{0}}
\newcommand{\whw}{\widehat{\vw}}

\newcommand{\calE}{\mathcal{E}}

%% file: results/summary.tex
\newcommand{\NumGaussianRepeats}{240}

%% file: latex/tab_full.tex
\begin{longtable}{lrrrrrrrrrrr}
\toprule
Setting [graph] & $N$ & $p$ & $|\Delta|_{\text{G}}$ & $|\Delta|_{\text{T}}$ & $\alpha^{\text{de}}$ & $\alpha^{\star}_{\text{test}}$ & regret & local & cons. & best & NC \\
\midrule
\endhead
iid; CIFAR auto/truck [complete] & 6 & 100 & 0.70 & 2.69 & 1 & 5 & 0.17 & 31.9 & 23.3 & 23.3 & 3.1 \\
iid; CIFAR auto/truck [cycle] & 6 & 100 & 0.70 & 2.69 & 2 & 10 & 0.11 & 31.9 & 23.3 & 23.3 & 3.1 \\
iid; CIFAR auto/truck [path] & 6 & 100 & 0.70 & 2.69 & 5 & 10 & 0.07 & 31.9 & 23.4 & 23.4 & 3.1 \\
iid; F-MNIST pullover/coat [complete] & 6 & 100 & 1.23 & 1.07 & 1 & 10 & 0.10 & 21.3 & 13.8 & 13.8 & 3.7 \\
iid; F-MNIST pullover/coat [cycle] & 6 & 100 & 1.23 & 1.07 & 2 & 10 & 0.08 & 21.3 & 13.8 & 13.8 & 3.7 \\
iid; F-MNIST pullover/coat [path] & 6 & 100 & 1.23 & 1.07 & 5 & 10 & 0.03 & 21.3 & 13.8 & 13.8 & 3.7 \\
iid; MNIST 3/5 [complete] & 6 & 100 & 0.84 & 1.77 & 1 & 5 & 0.04 & 10.9 & 5.1 & 5.1 & 2.0 \\
iid; MNIST 3/5 [cycle] & 6 & 100 & 0.84 & 1.77 & 2 & 5 & 0.03 & 10.9 & 5.1 & 5.1 & 2.0 \\
iid; MNIST 3/5 [path] & 6 & 100 & 0.84 & 1.77 & 5 & 10 & 0.02 & 10.9 & 5.1 & 5.1 & 2.0 \\
Dirichlet; CIFAR vehicle/animal, $\beta$=100.0 [complete] & 6 & 100 & 0.87 & 1.78 & 1 & 5 & 0.11 & 20.7 & 12.5 & 12.5 & 3.1 \\
Dirichlet; CIFAR vehicle/animal, $\beta$=100.0 [knn] & 6 & 100 & 0.87 & 1.78 & 1 & 5 & 0.18 & 20.7 & 12.5 & 12.5 & 3.1 \\
Dirichlet; F-MNIST upper/rest, $\beta$=100.0 [complete] & 6 & 100 & 1.19 & 3.70 & 1 & 10 & 0.08 & 10.7 & 6.3 & 6.3 & 3.3 \\
Dirichlet; F-MNIST upper/rest, $\beta$=100.0 [knn] & 6 & 100 & 1.19 & 3.70 & 1 & 10 & 0.09 & 10.7 & 6.3 & 6.3 & 3.3 \\
Dirichlet; MNIST $<5$/$\ge5$, $\beta$=100.0 [complete] & 6 & 100 & 0.96 & 2.02 & 1 & 5 & 0.13 & 26.0 & 17.2 & 17.2 & 2.2 \\
Dirichlet; MNIST $<5$/$\ge5$, $\beta$=100.0 [knn] & 6 & 100 & 0.96 & 2.02 & 1 & 10 & 0.17 & 26.0 & 17.2 & 17.2 & 2.2 \\
Dirichlet; CIFAR vehicle/animal, $\beta$=0.1 [complete] & 6 & 100 & 0.92 & 2.12 & 0.2 & 0.3 & 0.08 & 15.5 & 11.4 & 10.5 & 3.1 \\
Dirichlet; CIFAR vehicle/animal, $\beta$=0.1 [knn] & 6 & 100 & 0.92 & 2.12 & 0.2 & 0.3 & 0.05 & 15.5 & 11.4 & 10.4 & 3.1 \\
Dirichlet; CIFAR vehicle/animal, $\beta$=0.5 [complete] & 6 & 100 & 1.03 & 2.41 & 0.3 & 0.5 & 0.10 & 16.5 & 11.3 & 11.0 & 3.1 \\
Dirichlet; CIFAR vehicle/animal, $\beta$=0.5 [knn] & 6 & 100 & 1.03 & 2.41 & 0.5 & 0.5 & 0.00 & 16.5 & 11.3 & 10.9 & 3.1 \\
Dirichlet; F-MNIST upper/rest, $\beta$=0.1 [complete] & 6 & 100 & 0.79 & 2.29 & 0.1 & 0.1 & 0.00 & 4.6 & 4.1 & 3.1 & 3.9 \\
Dirichlet; F-MNIST upper/rest, $\beta$=0.1 [knn] & 6 & 100 & 0.79 & 2.29 & 0.1 & 0.1 & 0.00 & 4.6 & 4.1 & 3.1 & 3.9 \\
Dirichlet; F-MNIST upper/rest, $\beta$=0.5 [complete] & 6 & 100 & 1.19 & 3.06 & 0.3 & 0.3 & 0.00 & 8.2 & 5.9 & 5.5 & 3.8 \\
Dirichlet; F-MNIST upper/rest, $\beta$=0.5 [knn] & 6 & 100 & 1.19 & 3.06 & 0.3 & 0.5 & 0.01 & 8.2 & 5.8 & 5.6 & 3.8 \\
Dirichlet; MNIST $<5$/$\ge5$, $\beta$=0.1 [complete] & 6 & 100 & 1.20 & 3.07 & 0.05 & 0.1 & 0.13 & 11.5 & 15.0 & 9.2 & 2.5 \\
Dirichlet; MNIST $<5$/$\ge5$, $\beta$=0.1 [knn] & 6 & 100 & 1.20 & 3.07 & 0.05 & 0.1 & 0.12 & 11.5 & 14.7 & 9.3 & 2.5 \\
Dirichlet; MNIST $<5$/$\ge5$, $\beta$=0.5 [complete] & 6 & 100 & 1.59 & 2.68 & 0.1 & 0.3 & 0.25 & 18.4 & 16.6 & 15.0 & 3.0 \\
Dirichlet; MNIST $<5$/$\ge5$, $\beta$=0.5 [knn] & 6 & 100 & 1.59 & 2.68 & 0.1 & 0.2 & 0.11 & 18.4 & 16.6 & 14.8 & 3.0 \\
rotation; F-MNIST pullover/coat [complete] & 6 & 100 & 2.28 & 1.49 & 0.1 & 0.1 & 0.00 & 22.3 & 21.3 & 17.1 & 5.6 \\
rotation; F-MNIST pullover/coat [path] & 6 & 100 & 2.28 & 1.49 & 0.05 & 0.1 & 0.09 & 22.3 & 21.2 & 17.8 & 5.6 \\
rotation; MNIST 3/5 [complete] & 6 & 100 & 1.48 & 2.52 & 0.02 & 0.05 & 0.13 & 10.9 & 17.9 & 9.1 & 3.3 \\
rotation; MNIST 3/5 [path] & 6 & 100 & 1.48 & 2.52 & 0.1 & 0.1 & 0.00 & 10.9 & 16.4 & 8.3 & 3.3 \\
rotation; F-MNIST 0v6 [path] & 6 & 100 & 2.81 & 0.97 & 0.2 & 0.3 & 0.05 & 23.0 & 19.6 & 18.7 & 5.8 \\
rotation; F-MNIST 4v6 [path] & 6 & 100 & 2.33 & 1.21 & 0.1 & 0.2 & 0.06 & 23.0 & 20.2 & 17.9 & 5.7 \\
rotation; F-MNIST 5v7 [path] & 6 & 100 & 4.38 & 2.40 & 0.05 & 0.05 & 0.00 & 11.3 & 14.8 & 8.1 & 6.3 \\
rotation; F-MNIST 7v9 [path] & 6 & 100 & 1.95 & 2.64 & 0.05 & 0.1 & 0.02 & 8.2 & 7.3 & 6.1 & 5.1 \\
rotation; MNIST 1v7 [path] & 6 & 100 & 1.93 & 3.16 & 0.1 & 0.1 & 0.00 & 3.5 & 2.6 & 1.9 & 6.7 \\
rotation; MNIST 2v7 [path] & 6 & 100 & 0.98 & 2.73 & 0.05 & 0.1 & 0.13 & 5.5 & 6.8 & 3.6 & 3.3 \\
rotation; MNIST 4v9 [path] & 6 & 100 & 1.66 & 2.26 & 0.05 & 0.2 & 0.24 & 9.6 & 14.5 & 6.8 & 4.1 \\
rotation; MNIST 7v9 [path] & 6 & 100 & 3.02 & 2.14 & 0.05 & 0.1 & 0.09 & 9.6 & 12.5 & 7.7 & 5.1 \\
alignment; CIFAR subpop. [alternating] & 6 & 100 & 0.89 & 1.99 & 0.5 & 5 & 0.18 & 17.3 & 12.1 & 12.1 & 3.3 \\
alignment; CIFAR subpop. [grouped] & 6 & 100 & 0.89 & 1.99 & 0.5 & 0.5 & 0.00 & 17.3 & 12.0 & 11.5 & 3.3 \\
alignment; MNIST 0/90$^\circ$ [alternating] & 6 & 100 & 1.22 & 2.41 & 0.01 & 0.01 & 0.00 & 11.0 & 12.8 & 10.3 & 2.9 \\
alignment; MNIST 0/90$^\circ$ [grouped] & 6 & 100 & 1.22 & 2.41 & 0.05 & 0.1 & 0.03 & 11.0 & 12.5 & 8.9 & 2.9 \\
HAR; sit/stand, 10 groups [complete] & 10 & 30 & 3.15 & 4.61 & 0.3 & 0.5 & 0.08 & 20.6 & 14.8 & 14.3 & 2.8 \\
HAR; sit/stand, 10 groups [knn] & 10 & 30 & 3.15 & 4.61 & 0.2 & 0.5 & 0.14 & 20.6 & 14.8 & 14.3 & 2.8 \\
HAR; sit/stand, 30 users [complete] & 30 & 12 & 2.68 & 4.59 & 0.02 & 0.5 & 1.29 & 23.7 & 20.0 & 19.0 & 1.8 \\
HAR; sit/stand, 30 users [knn] & 30 & 12 & 2.68 & 4.59 & 0.01 & 0.2 & 1.28 & 23.7 & 20.0 & 19.0 & 1.8 \\
HAR; up/down, 10 groups [complete] & 10 & 30 & 0.61 & 1.86 & 0.02 & 0.05 & 0.18 & 2.9 & 4.1 & 1.5 & 1.6 \\
HAR; up/down, 10 groups [knn] & 10 & 30 & 0.61 & 1.86 & 0.01 & 0.02 & 0.13 & 2.9 & 4.3 & 1.5 & 1.6 \\
HAR; up/down, 30 users [complete] & 30 & 12 & 0.49 & 1.50 & 0.01 & 0.05 & 0.15 & 2.2 & 4.3 & 0.9 & 1.1 \\
HAR; up/down, 30 users [knn] & 30 & 12 & 0.49 & 1.50 & 0.005 & 0.02 & 0.11 & 2.2 & 4.3 & 0.9 & 1.1 \\
Adult; age [complete] & 5 & 40 & 1.63 & 1.68 & 0.3 & 2 & 0.14 & 24.6 & 23.0 & 23.0 & 7.0 \\
Adult; age [knn] & 5 & 40 & 1.63 & 1.73 & 0.3 & 2 & 0.19 & 24.6 & 23.0 & 23.0 & 7.0 \\
Adult; workclass [complete] & 6 & 40 & 6.77 & 1.47 & 0.5 & 2 & 0.12 & 25.9 & 23.1 & 23.0 & 7.4 \\
Adult; workclass [knn] & 6 & 40 & 6.77 & 1.50 & 0.5 & 5 & 0.15 & 25.9 & 23.0 & 23.0 & 7.4 \\
\bottomrule
\end{longtable}